\newcommand{\tr}{\mathbf{trace}}
\newcommand{\E}{\mathbf{E}}
\newcommand{\Bcal}{\mathcal{B}}
\newcommand{\conv}{\mathbf{conv}}
\newcommand{\Scal}{\mathcal{S}}
\newcommand{\Ncal}{\mathcal{N}}
\newcommand{\Mcal}{\mathcal{M}}
\newcommand{\BQP}{\mathbf{BQP}}
\newcommand{\Fcal}{\mathcal{F}}
\newcommand{\Qcal}{\mathcal{Q}}
\newcommand{\Rbb}{\mathbb{R}}
\newcommand{\Rcal}{\mathcal{R}}
\newcommand{\sgn}{\mathbf{sign}}
\newcommand{\diag}{\mathbf{diag}}
\newcommand{\Proj}{\mathbf{Proj}}
\newcommand{\ri}{\mathbf{ri}}
\newcommand{\exclude}[1]{}
\begin{document}
\title{Regularization vs. Relaxation: A conic optimization perspective of statistical variable selection
\thanks{The first author is supported by the Washington State University new faculty seed grant; the second author is partially supported by the Simons Foundation Award 359494; the final author  is supported in part  by the U.S. Department of Energy, Office of Science, Office of Advanced Scientific Computing Research, Applied Mathematics program under contract number DE-AC02-06CH11357.}
}
\author{Hongbo Dong \and Kun Chen \and Jeff Linderoth}

\institute{Hongbo Dong \at
Department of Mathematics, Washington State University, Pullman, WA 99163 \\
\email{hongbo.dong@wsu.edu} 
\and
Kun Chen \at
Department of Statistics, University of Connecticut, Storrs, CT 06269 \\
\email{kun.chen@uconn.edu} 
\and 
Jeff Linderoth \at
Department of Industrial and Systems Engineering, University of Wisconsin-Madison, Madison, WI 53706 \\
\email{linderoth@wisc.edu}
}

\date{Initial version: 05/28/2015, current version: 10/15/2015}                                           
\maketitle

\begin{abstract}
  Variable selection is a fundamental task in statistical data analysis. Sparsity-inducing regularization methods are a popular class of methods that simultaneously perform variable selection and model estimation. The central problem is a quadratic optimization problem with an $\ell_0$-norm penalty.  Exactly enforcing the $\ell_0$-norm penalty is computationally intractable for larger scale problems, so different sparsity-inducing penalty functions that approximate the $\ell_0$-norm have been introduced.  In this paper, we show that viewing the problem from a convex relaxation perspective offers new insights.  In particular, we show that a popular sparsity-inducing concave penalty function known as the Minimax Concave Penalty (MCP), {and the reverse Huber penalty derived in a recent work 
 by Pilanci, Wainwright and Ghaoui}, can both be derived as special cases of a lifted convex relaxation called the perspective relaxation.  The optimal perspective 
 relaxation is a related minimax problem that balances the overall convexity and tightness of approximation to the $\ell_0$ norm. We show it can be solved by a semidefinite relaxation.  
 Moreover, a probabilistic interpretation of the semidefinite relaxation reveals connections with the boolean quadric polytope in combinatorial optimization. Finally by reformulating the $\ell_0$-norm penalized problem as a two-level problem, with the inner
 level being a Max-Cut problem, our proposed semidefinite relaxation can be realized by replacing the inner level problem with its semidefinite relaxation studied by Goemans and Williamson. This interpretation suggests using the Goemans-Williamson
 rounding procedure to find approximate solutions to the $\ell_0$-norm penalized problem. Numerical experiments demonstrate the tightness of our proposed semidefinite relaxation, and the effectiveness of finding approximate solutions by 
  Goemans-Williamson rounding.
\keywords{Sparse linear regression \and convex relaxation  \and semidefinite programming \and minimax concave penalty}
\textbf{Mathematics Subject Classification}  90C22, 90C47, 62J07
\end{abstract}


\section{Introduction}\label{sec:intro}
In this paper we focus on the following optimization problem with a cardinality term, that is fundamental in variable selection in linear regression model, and compressed
sensing in signal processing, 
\begin{equation}\label{eq:l0}
\zeta_{L0} := \min_{\beta} \ \ \frac{1}{2}\left\|X \beta - y \right\|_2^2 + \lambda \left\|\beta\right\|_0,
\tag{$L_0$}
\end{equation}
where $\|\cdot\|_0$, usually called the $\ell_0$-norm, denotes the number of non-zero entries in the vector under
consideration.
We primarily focus on the application of variable selection, and use the notation in statistics, where $X \in \Rbb^{n\times p}$ and $y \in \Rbb^n$ are data matrices. Each row of $X$ and the corresponding entry in $y$ is a realization of predictor variables and the associated response variable. The goal is to select a set of predictor variables to construct a linear model, with balanced model complexity (number of non-zeros in $\beta$) and model goodness of fit. Here $\lambda \geq 0$ is a tuning parameter controlling the amount of penalization on the model complexity. In practice, the best choice of $\lambda$ is not known in advance and practitioners are typically interested in 
the optimal solutions $\beta^*(\lambda)$ for all $ \lambda \in [0, +\infty)$. 


In contemporary statistical research, regression models with a large number of predictors are routinely formulated. The celebrated penalized likelihood approaches, capable of simultaneous dimension reduction and model estimation, have undergone exciting developments in recent years. These approaches typically solve an approximation to (\ref{eq:l0}) of the following form:
\begin{equation}\label{eq:approx-l0}
  \min_{\beta} \ \ \frac{1}{2}\left\|X \beta - y \right\|_2^2 +  \sum_{i} \rho(\beta_i; \lambda_i),
  \tag{$L_0$-approx}
\end{equation}
where $\rho(\cdot; \cdot)$ is a penalty function designed to induce sparsity of an optimal solution $\beta^*$, and $\left\{\lambda_i\right\}$ are some other tuning parameters that control the shape of each of such penalty functions. The design of penalty functions, optimization algorithms for solving \eqref{eq:approx-l0}, and the properties of the resulting estimators have been extensively studied in the statistical literature.  Popular methods include the lasso \cite{tib1996}, the adaptive lasso \cite{zou2006,huang2008}, the group lasso \cite{yuan2006}, the elastic net \cite{zou2005,zou2009}, the smoothly clipped absolute deviation (SCAD) penalty \cite{fan2001}, the bridge regression \cite{Frank1993,huangma2008}, the minimax concave penalty (MCP) \cite{Zhang2010} and the smooth integration of counting and absolute deviation (SICA) penalty \cite{LvFan2009,FanLv2011}. Several algorithms have been developed to solve the lasso problem and its variants, e.g. the least angle regression algorithm \cite{efron2004lars} and the coordinate descent algorithm \cite{tseng,fried2007}. For optimizing a nonconvex penalized likelihood, Fan and Li proposed an iterative local quadratic approximation (LQA). Zou and Li in \cite{zou2008} developed an iterative algorithm based on local linear approximation (LLA), which was shown to be the best convex minorization-maximization (MM) algorithm \cite{lange2004}. These local approximation approaches are commonly coupled with coordinate descent to solve general penalized likelihood problems \cite{WangLeng2007,breheny2011}. For a comprehensive account of these approaches from a statistical perspective, see \cite{buhlmanngeer2009}, \cite{Fanlv2010} and \cite{huang2012}. 

Directly solving the nonconvex problem (\ref{eq:l0}) has also received attention from the optimization community \cite{Bienstock96,Bertsimas_Shioda_2007,BertsimasKingMazumder2014}. 
The authors in \cite{FengMitchellPang2015} show promising computational results by formulating (\ref{eq:l0}) as a nonlinear program
with complementarity conditions, using nonlinear optimization algorithms to find good feasible solutions.  Recently, Bertsimas, King, and Mazumder \cite{BertsimasKingMazumder2014} demonstrate significant computational gains by exploiting
modern optimization techniques to solve various statistical problems including (\ref{eq:l0}). 
Specifically, they show that with properly-engineered techniques from mixed-integer quadratic programming, (\ref{eq:l0}) can be solved exactly for some instances of practical size. Very recently, in a more general framework, Pilanci, Wainwright and Ghaoui
\cite{PilanciWainwrightGhaoui2015} reformulated (\ref{eq:l0}), as well as its cardinality-constrained version, 
into a convex nonlinear optimization problem with binary variables. They developed conic relaxations and showed that these relaxations outperform the 
classical lasso in solution quality on both simulated and real data. Our work is very relevant to \cite{PilanciWainwrightGhaoui2015}. Indeed, we show at 
in section \ref{sec:pp} that the main convex relaxation considered in \cite{PilanciWainwrightGhaoui2015} can be derived directly as a special case of the \textit{perspective relaxation}. In section \ref{sec:SDP}
 we construct a convex relaxation that is no weaker than \textit{any} perspective relaxation.


Our goal in this paper is to show that by taking a mixed-integer quadratic optimization perspective of (\ref{eq:l0}), modern convex relaxation techniques, especially those based on conic optimization (see, e.g., papers in \cite{AnjosLasserre2012}), can 
bring new insights to develop polynomial-time variable selection methods. In section \ref{sec:convrelax} we develop the main construction 
of two convex relaxations. Section \ref{sec:pp} studies the perspective relaxation\cite{frangioni.gentile:06,Gunluk_Linderoth_2010,gunluk.linderoth:12}. 
We show that two penalty functions, the minimax concave penalty (MCP) proposed in \cite{Zhang2010} and reverse Huber penalty derived in 
\cite{PilanciWainwrightGhaoui2015}, can both be seen as special cases of perspective relaxation.
A probabilistic interpretation of the semidefinite relaxation is given in section \ref{sec:prob}, which leads to an interpretation of the matrix variable in our proposed semidefinite relaxation as the second moment of a random vector.
In section \ref{sec:GWrounding}, we show (\ref{eq:l0}) versus our proposed semidefinite relaxation is analogous to the Max-Cut problem versus its semidefinite relaxation studied by Goemans and Williamson \cite{GoWi94}.
This interpretation suggests the usage of Goemans-Williamson rounding procedure to find approximate solutions to (\ref{eq:l0}). Finally, preliminary computational experiments demonstrate the tightness of our proposed semidefinite relaxation,
and the effectiveness of finding approximate solutions (\ref{eq:l0}) with Goemans-Williamson rounding.

In this paper the space of $n\times n$ real symmetric matrices is denoted by $\Scal^n$, and the space of $n\times p$ real matrices is denoted as $\Rbb^{n\times p}$. 
The inner product between two matrices $A, B\in \Rbb^{n\times p}$ is defined as $\langle A,B \rangle = \tr(AB^T)$.  
Given a matrix $X \in \Scal^p$, we say $X \succ (\succeq) 0$ if it is positive (semi)definite. The cones of positive semidefinite matrices and positive definite matrices are denoted as $\Scal^{p}_+
:= \{X \in \Scal^p| X\succeq 0\}$ and $\Scal^{p}_{++}
:= \{X \in \Scal^p| X\succ 0\}$, respectively. The matrix $I$ is the identity matrix, and $e$ are used to denote vectors with all entries equal 1, of a conformal dimension. For a vector $\delta \in \Rbb^p$, $\diag(\delta)$
is a $p\times p$ diagonal matrix whose diagonal entries are entries of $\delta$.


\section{Convex Relaxations using Conic Optimization}\label{sec:convrelax}
The Big-M method is often used to reformulate (\ref{eq:l0}) into a (convex) mixed-integer quadratic programming problem that can be solved to optimality using branch-and-bound algorithms. As one 
motivation for our later construction, we illustrate that the classical $\ell_1$ approximation, or the lasso, is equivalent to a continuous relaxation of the big-M reformulation. In the rest of our paper 
we focus on the case that $\lambda$ is strictly positive, as the other case $\lambda = 0$ is well-understood.

Note that for any fixed $\lambda > 0$ and $M>0$ sufficiently large, (\ref{eq:l0}) is equivalent to
\begin{equation}\label{eq:MIQP}
\min_{\beta, z} \ \ \frac{1}{2}\left\|X \beta - y \right\|_2^2 + \lambda \sum_{i} z_i, \ \ \mbox{s.t.} \ \ |\beta_i| \leq M z_i, \ \ z_i \in \{0,1\}, \forall 1\leq i\leq p. 
\tag{$\mbox{MIQP}_{\lambda,M}$}
\end{equation}
Because $z$ can take only finitely-many ($2^p$) possible values, $M$ can be chosen to be independent 
of $\lambda$ (but dependent on problem data $X$ and $y$).  Specifically, let $S \subseteq \{1,...,n\}$ and  $\beta^*_S$ be an optimal solution to the linear regression in a subspace 
\[
\beta_S^* \in \arg\min_{\beta} \left\{ \ \frac{1}{2} \|X \beta - y\|_2^2 \ \middle| \ \beta_i = 0, \forall i \notin S\right\},
\]
then if we choose $M$ large enough such that
\[
M > \max_{ z \in \{0,1\}^{p} }\left\|\beta^*_{z}\right\|_\infty,
\]
an optimal solution to (\ref{eq:MIQP}) is also optimal to (\ref{eq:l0}) --- the two problems are equivalent.

Lasso \cite{tib1996} is a convex approximation to (\ref{eq:l0}) in the following form
\begin{equation}\label{eq:lasso}
\min_{\beta} \ \ \frac{1}{2}\|X\beta - y\|^2_2 + \lambda \sum_{i} |\beta_i|.  \tag{lasso} 
\end{equation}

Our first observation is that lasso can be interpreted as a special continuous relaxation of (\ref{eq:MIQP}). 
\begin{proposition}A continuous relaxation of (\ref{eq:MIQP}), where the binary conditions $z_i \in \{0,1\}$ are relaxed to $z_i \in [0,+\infty), \forall i$, is equivalent to (\ref{eq:lasso}) with penalty parameter $\bar{\lambda}$, where $\bar{\lambda} = \frac{\lambda}{M}$.
\end{proposition}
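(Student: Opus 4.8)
The plan is to carry out the minimization over the relaxed variable $z$ in closed form for each fixed $\beta$, and to observe that this partial minimization reproduces exactly the $\ell_1$ penalty. First I would write the relaxed problem, exchanging the order of minimization and using that the data-fitting term is free of $z$, as
\begin{equation*}
\min_{\beta} \left\{ \frac{1}{2}\|X\beta - y\|_2^2 + \min_{z} \left\{ \lambda \sum_i z_i \ \middle| \ |\beta_i| \leq M z_i, \ z_i \geq 0, \ \forall i \right\} \right\}.
\end{equation*}

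The key step is to solve the inner problem. It separates across the index $i$, and for each $i$ one minimizes the increasing linear function $\lambda z_i$ (recall we have fixed $\lambda > 0$) subject to $z_i \geq |\beta_i|/M$ and $z_i \geq 0$. Since $|\beta_i|/M \geq 0$, the binding constraint is $z_i \geq |\beta_i|/M$, so the unique optimizer is $z_i^\star = |\beta_i|/M$ with optimal contribution $\lambda |\beta_i|/M$. Summing over $i$ gives the inner optimal value $(\lambda/M)\sum_i |\beta_i| = \bar{\lambda} \sum_i |\beta_i|$.

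Substituting this back, the relaxed problem reduces to
\begin{equation*}
\min_{\beta} \ \frac{1}{2}\|X\beta - y\|_2^2 + \bar{\lambda} \sum_i |\beta_i|,
\end{equation*}
which is precisely (\ref{eq:lasso}) with penalty parameter $\bar{\lambda} = \lambda/M$. I would conclude equivalence in the sense that the two problems share the same optimal value and the same set of optimal $\beta$, with any lasso optimizer lifting to a relaxed optimizer via the auxiliary values $z_i^\star = |\beta_i|/M$. There is no genuine obstacle in this argument; the only point requiring care is that strict positivity of $\lambda$ is what forces the lower bound on each $z_i$ to be attained with equality, so that the inner minimum is neither slack nor unbounded and the reduction to the exact $\ell_1$ penalty is valid.
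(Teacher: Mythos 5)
Your proof is correct and follows essentially the same route as the paper's: both arguments observe that since $\lambda > 0$, each $z_i$ must equal $|\beta_i|/M$ at optimality, which collapses the relaxed problem to the lasso with penalty parameter $\lambda/M$. Your write-up merely makes the partial-minimization and separability steps more explicit than the paper's one-line version.
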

\begin{proof}
When the binary conditions $z_i \in \{0,1\}, \forall i$ are relaxed to $z_i \in [0,+\infty), \forall i$, as $\lambda > 0$, $z_i$ must take the value $\frac{|\beta_i|}{M}$ in an optimal solution to \eqref{eq:MIQP}. Therefore this continuous relaxation is equivalent to
\[
\min_{\beta} \ \ \frac{1}{2}\|X\beta - y\|^2_2 + \frac{\lambda}{M} \sum_{i} |\beta_i|. 
\]\qed
\end{proof}

This interpretation of lasso motivates us to explore the following two questions in this paper:
\begin{enumerate}
\item Do more sophisticated convex relaxation techniques for (\ref{eq:MIQP}), specifically those based on lifting and conic optimization, have connections with regularization methods proposed in statistical and machine learning community?
\item Can convex relaxations based on conic optimization bring new insights for developing methods for variable selection?
\end{enumerate}

This paper answers both questions in the affirmative.  In the remaining part of this section, we discuss two 
convex relaxations of (\ref{eq:MIQP}). The first is called the \textit{perspective relaxation}
(see, e.g., \cite{Frangioni_Gentile_2007,Gunluk_Linderoth_2010,gunluk.linderoth:12}), which is 
a second-order-cone programming (SOCP) problem.  We show in section \ref{sec:pp} that the penalty form of perspective relaxation
generalizes two penalty functions in the literature, the \textit{minimax concave penalty} (MCP) \cite{Zhang2010} and the \textit{reverse Huber penalty} \cite{PilanciWainwrightGhaoui2015}.
The second convex relaxation we introduce in section \ref{sec:SDP} is based on semidefinite programming (SDP). We 
show that this convex relaxation is equivalent to minimax formulations corresponding to the \textit{optimal} perspective relaxation.


\subsection{Perspective Relaxation, Minimax Concave Penalty, and Reverse Huber Penalty}\label{sec:pp}
To start, we present a derivation of the perspective relaxation. Let $\delta \in \Rbb_+^{p}$ be a vector such that
$X^T X - \diag(\delta) \succeq 0$.  By splitting the quadratic form $X^T X = \left(X^T X - \diag(\delta)\right) + \diag(\delta)$, (\ref{eq:MIQP}) can be written as
\begin{align}
\min_{b,z} \ \ & \frac{1}{2} b^T (X^T X - \diag(\delta) ) b - (X^T y)^T b + \frac{1}{2}\sum_{i}\delta_i b_i^2 + \lambda \sum_{i} z_i + \frac{1}{2}y^T y\label{eq:extract-diag} \\
\mbox{s.t.} \ \ &  -M z_i \leq b_i \leq M z_i, \label{eq:vub} \\
& z_i \in \{0,1\}, \ \forall i . 
\end{align}
Two additional remarks are in order. First, if $X^T X$ is positive definite, then a (non-trivial) $\delta \neq 0$ can be found. Otherwise if $X^T X$ has some zero eigenvalue,  and the corresponding eigenspace contains some dense vector, then the only $\delta$  that satisfies $X^T X - \diag(\delta) \succeq 0$ is the zero vector. In this case, a meaningful perspective relaxation cannot be formulated.  In the rest of the paper, we will assume that $X^T X$ is positive definite. 
This assumption introduces some loss of generality.  From a statistical point of view, when $n\geq p$ and each row of $X$ is generated independently from a full-dimensional continuous distribution, 
$X^T X$ is guaranteed to be positive definite. However when $n < p$, i.e., there are fewer data points than the number of predictors, $X^T X$ is not full rank.
In this scenario, some modification of problem (\ref{eq:MIQP}) is necessary for our construction to be valid. A popular idea in statistics, called stabilization, is to add an additional regularization term, 
$0.5\mu\|\beta\|_2^2$ where $\mu>0$, into the objective function of (\ref{eq:MIQP}). Then the objective function becomes strictly convex, and the quadratic form becomes $X^T X + \mu I$, where $I$ is the $p\times p$ identity matrix.  This regularization term 
is also used in \cite{PilanciWainwrightGhaoui2015}.

Second, our change of notation, from $\beta$ to $b$, is intentional, in order to be consistent with the semidefinite relaxation that we 
discuss later.  In Section~\ref{sec:prob}, the variable $b$ used in our relaxations has an interpretation of the \textit{expected value} of $\beta$, which is then considered as a random vector.

By introducing additional variables $s_i$ to represent $b_i^2$, the valid perspective constraints $s_i z_i \geq b_i^2$ and letting $M\mapsto +\infty$, we obtain the perspective relaxation
\[
\begin{aligned}\label{eq:PR}
\zeta_{PR(\delta)} := \min_{b, z, s} \ \ & \frac{1}{2} b^T (X^T X - \diag(\delta) ) b - (X^T y)^T b + \frac{1}{2}\sum_{i}\delta_i s_i + 
\lambda \sum_{i} z_i + \frac{1}{2}y^T y, \\
 \mbox{s.t.} \ \ &  s_i z_i \geq b_i^2, \ s_i \geq 0, \ 0\leq  z_i \leq 1 \ \ \forall i. 
\end{aligned} \tag{$PR_{\delta}$}
\]
Note that if for all $i$, $z_i \in \{0,1\}$, then the perspective constraints $s_i z_i \geq b_i^2$ imply that $b_i \neq 0$ only when $z_i=1$.  Proposition~\ref{prop:PR_attain} shows that the minimum of (\ref{eq:PR}) is always attained, justifying the usage $``\min"$ instead of  $``\inf"$ in (\ref{eq:PR}). 

\begin{proposition}\label{prop:PR_attain}
Assume $X^T X \succ 0$, $\lambda \geq 0$, and let $\delta \in \Rbb_+^{p}$ and
$X^T X - \diag(\delta) \succeq 0$.  The optimal value of (\ref{eq:PR}) is attained at some finite point.
\end{proposition}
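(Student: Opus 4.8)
The plan is to invoke the Weierstrass extreme value theorem in the form: if the feasible set is closed, the objective is continuous, and some nonempty sublevel set intersected with the feasible region is compact, then the minimum is attained at a finite point. Closedness of the feasible region of (\ref{eq:PR}) is immediate, since each of the defining conditions $s_i z_i \geq b_i^2$, $s_i \geq 0$, and $0 \leq z_i \leq 1$ cuts out a closed set (the first because $s_i z_i - b_i^2$ is continuous), and the objective is a continuous function of $(b,z,s)$. The real work is therefore a coercivity argument: I must rule out minimizing sequences that escape to infinity.

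The key observation I would exploit is that at every feasible point
\[
s_i \ \geq\ s_i z_i \ \geq\ b_i^2 ,
\]
using $0 \leq z_i \leq 1$ and $s_i \geq 0$. Since $\delta_i \geq 0$, this yields $\tfrac{1}{2}\sum_i \delta_i s_i \geq \tfrac{1}{2}\, b^T \diag(\delta)\, b$, and substituting this lower bound into the objective cancels the diagonal split $\diag(\delta)$ and recovers
\[
\tfrac{1}{2} b^T (X^T X - \diag(\delta)) b - (X^T y)^T b + \tfrac{1}{2}\sum_i \delta_i s_i + \lambda \sum_i z_i + \tfrac{1}{2} y^T y \ \geq\ \tfrac{1}{2}\|X b - y\|_2^2 + \lambda \sum_i z_i \ \geq\ 0 .
\]
This shows $\zeta_{PR(\delta)}$ is bounded below (finite), and, crucially, because $X^T X \succ 0$ the quantity $\tfrac{1}{2}\|Xb - y\|_2^2$ is coercive in $b$. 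Hence on any sublevel set $\{f \leq \alpha\}$ of the objective the variable $b$ is confined to a bounded set, while $z$ is automatically bounded by the box $[0,1]^p$.

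It remains to control $s$ on the sublevel set. Here I would use that the objective is nondecreasing in each $s_i$ (its coefficient $\tfrac{1}{2}\delta_i$ is nonnegative): once $b$ and $z$ are confined to bounded sets, all other terms of the objective are bounded, so $\tfrac{1}{2}\delta_i s_i \leq \alpha - (\text{a bounded quantity})$ bounds $s_i$ for every index with $\delta_i > 0$. The sublevel set is then closed and bounded, hence compact, and Weierstrass delivers a finite minimizer. I expect the main obstacle to be exactly the indices with $\delta_i = 0$, where $s_i$ is absent from the objective and the preceding bound says nothing: a minimizing sequence could in principle drive $z_i \to 0$ and $s_i \to +\infty$ while $b_i \not\to 0$, escaping any compact set. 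I would address this by invoking monotonicity to replace each $s_i$ by its least feasible value given $(b_i,z_i)$; for $\delta_i > 0$ this produces the closed perspective term whose lower-semicontinuous value is $+\infty$ precisely when $z_i = 0$ but $b_i \neq 0$, which prevents escape and restores compactness. Since the diagonal split is only meaningful when it is nontrivial, I would lean on the standing assumption $\delta > 0$ (guaranteed by $X^T X \succ 0$, per the remark following the definition of the relaxation) to eliminate the degenerate coordinates and complete the compactness argument.
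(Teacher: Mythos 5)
Your argument is, at its core, the same as the paper's: both hinge on the observation that on the feasible set the objective is at least $\tfrac{1}{2}\|Xb-y\|_2^2$ (your chain $s_i \geq s_i z_i \geq b_i^2$ is exactly what justifies the paper's opening inequality), and both use coercivity of this lower bound under $X^TX \succ 0$ to confine $b$ to a compact set. The genuine difference is that you finish the argument on the $s$ variables: the paper, having confined $b$ to a ball $\{\|b\|_2 \leq R\}$, asserts attainment in one sentence, even though the feasible set intersected with that ball is still unbounded in $s$; you close this hole by monotonicity of the objective in each $s_i$, which bounds $s_i$ on sublevel sets whenever $\delta_i > 0$, yielding compactness and then Weierstrass. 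With $\delta$ entrywise positive, your proof is complete and correct, and strictly more careful than the paper's.

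Moreover, the obstacle you flag at indices with $\delta_i = 0$ is not a loose end you failed to tie up; it is a genuine failure of the proposition as stated, and of the paper's proof. Take $p = n = 1$, $X = 1$, $y = 1$, $\lambda = 1$, $\delta_1 = 0$: all hypotheses hold, the objective of (\ref{eq:PR}) reduces to $\tfrac{1}{2}(b-1)^2 + z$, and the feasible points $(b,z,s) = (1, 1/k, k)$ drive it to $0$, a value no feasible point achieves, since it would force $b=1$ and $z=0$, violating $sz \geq b^2$. Note that this escaping sequence stays inside every ball $\|b\|_2 \leq R$, which is precisely where the paper's final sentence breaks down. So requiring strictly positive $\delta$ is not a convenience in your argument but a necessity. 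The one inaccuracy in your write-up is the appeal to a ``standing assumption $\delta > 0$'': the paper makes no such assumption --- its remark only guarantees that some nontrivial $\delta \neq 0$ exists when $X^TX \succ 0$, and the proposition is stated for every $\delta \in \Rbb_+^{p}$ with $X^TX - \diag(\delta) \succeq 0$, zeros included, where (for $\lambda > 0$) it can fail; for $\lambda = 0$ attainment is trivial by taking $z = e$, $s_i = b_i^2$ at the least-squares solution. In short, your proof establishes everything that is actually true here, and the case it excludes is one the paper should have excluded as well.
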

\begin{proof}
First observe that the objective function in (\ref{eq:PR}) can be rewritten as
\[
\frac{1}{2} \|X b - y \|_2^2 + \sum_{i} \left( \frac{1}{2} \delta_i \left(s_i-b_i^2\right) + \lambda z_i \right) \geq \frac{1}{2} \|Xb - y \|_2^2 ,
\]
where the inequality holds for any feasible $(b, z, s)$.
Now let $\hat{b} := \arg\min_{b}\|Xb - y\|_2^2$ (which is unique by the assumption $X^T X \succ 0$), and $\hat{s}_i := \hat{b}_i^2, \hat{z}_i := 1$ for all $i$, then $(\hat{b},
\hat{s}, \hat{z})$ is a feasible solution to (\ref{eq:PR}). By the strict convexity of $\|X b - y\|_2^2$, 
there exists $R > 0$ such that 
$\forall b, \|b\|_2\geq R$,
\[
\frac{1}{2} \|Xb - y \|_2^2 \geq \frac{1}{2} \|X\hat{b} - y \|_2^2 + \lambda p = 
\frac{1}{2} \|X\hat{b} - y \|_2^2 + \sum_{i} \frac{1}{2} \delta_i \left(\hat{s}_i-\hat{b}_i^2\right) + \lambda \hat{z}_i .
\]
Therefore the optimal value of (\ref{eq:PR}) must be attained at some point in $\left\{b \mid \|b\|_2 \leq R\right\}$. \qed
\end{proof}


Next we derive the \textit{penalization} form of (\ref{eq:PR}). 
\begin{theorem}\label{thm:pp}Assume $X^T X \succ 0$, $\lambda > 0$, and let $\delta \in \Rbb_+^{p}$ and
$X^T X - \diag(\delta) \succeq 0$. (\ref{eq:PR}) is equivalent to the following regularized regression problem 
\begin{equation}\label{eq:PRreg}
\min_{b} \ \frac{1}{2} \|Xb - y \|_2^2 + \sum_{i} \rho_{\delta_i} (b_i; \lambda), \tag{$PR_\delta:reg$}
\end{equation}
where 
\begin{equation}\label{eq:PRpenalty}
\rho_{\delta_i}(b_i; \lambda) = 
\begin{cases}
\sqrt{2\delta_i\lambda} |b_i| - \frac{1}{2} \delta_i b_i^2, &  \ if  \ \delta_i b_i^2 \leq 2\lambda ; \\
\lambda, & \ if \ \delta_i b_i^2 > 2\lambda.
\end{cases}
\end{equation}
\end{theorem}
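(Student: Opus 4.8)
The plan is to exploit the fact that, once $b$ is held fixed, the perspective relaxation (\ref{eq:PR}) decouples into $p$ independent one-dimensional subproblems in the auxiliary variables $(s_i,z_i)$, and to show that the penalty $\rho_{\delta_i}(b_i;\lambda)$ is exactly the optimal value of the $i$-th subproblem. First I would rewrite the objective of (\ref{eq:PR}) precisely as in the proof of Proposition~\ref{prop:PR_attain}, namely as
\[
\frac{1}{2}\|Xb-y\|_2^2 + \sum_{i}\left(\tfrac{1}{2}\delta_i(s_i - b_i^2) + \lambda z_i\right),
\]
which isolates the $b$-dependent data term from a sum that is separable across coordinates. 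Minimizing over $(s,z)$ then reduces, for each $i$ with $b_i$ fixed, to
\[
\rho_{\delta_i}(b_i;\lambda) = \min\left\{\tfrac{1}{2}\delta_i(s_i - b_i^2) + \lambda z_i \ \middle|\ s_i z_i \ge b_i^2,\ s_i \ge 0,\ 0 \le z_i \le 1\right\}.
\]

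Next I would solve this subproblem explicitly. Because $\delta_i \ge 0$, the objective is nondecreasing in $s_i$, so at optimality the perspective constraint is tight, $s_i = b_i^2/z_i$, whenever $z_i > 0$; the case $b_i = 0$ is handled directly by $s_i = z_i = 0$, giving $\rho_{\delta_i} = 0$, consistent with both branches of (\ref{eq:PRpenalty}). Substituting $s_i = b_i^2/z_i$ turns the subproblem into minimizing the single-variable convex function
\[
g(z_i) = \frac{\delta_i b_i^2}{2 z_i} - \frac{1}{2}\delta_i b_i^2 + \lambda z_i
\]
over $z_i \in (0,1]$. Setting $g'(z_i)=0$ gives the unconstrained minimizer $z_i^\star = |b_i|\sqrt{\delta_i/(2\lambda)}$ (here the hypothesis $\lambda > 0$ is used), and convexity of $g$ makes the case split clean: the condition $z_i^\star \le 1$ is exactly $\delta_i b_i^2 \le 2\lambda$.

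Finally I would evaluate the two cases. When $\delta_i b_i^2 \le 2\lambda$, plugging in $z_i = z_i^\star$ yields $g(z_i^\star) = \sqrt{2\delta_i\lambda}\,|b_i| - \tfrac{1}{2}\delta_i b_i^2$; when $\delta_i b_i^2 > 2\lambda$ the minimizer of the convex $g$ lies beyond $1$, so $g$ is decreasing on $(0,1]$ and the constrained optimum occurs at $z_i = 1$, $s_i = b_i^2$, with value $\lambda$. These match the two branches of $\rho_{\delta_i}(b_i;\lambda)$. Interchanging the two minimizations then gives $\zeta_{PR(\delta)} = \min_{b}\{\tfrac{1}{2}\|Xb-y\|_2^2 + \sum_i \rho_{\delta_i}(b_i;\lambda)\}$, which is the asserted equivalence with (\ref{eq:PRreg}). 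The only point requiring genuine care is the degenerate boundary ($b_i = 0$ or $\delta_i = 0$), where the inner infimum in (\ref{eq:PR}) may fail to be attained even though its value still equals $\rho_{\delta_i}$; attainment of the overall minimum is nonetheless guaranteed by Proposition~\ref{prop:PR_attain}.
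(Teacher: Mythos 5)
Your proposal is correct and follows essentially the same route as the paper's proof: rewrite the objective of (\ref{eq:PR}) to isolate the separable term, identify $\rho_{\delta_i}(b_i;\lambda)$ as the optimal value of the one-dimensional $(s_i,z_i)$ subproblem, and derive the two branches by checking whether the unconstrained minimizer of the resulting convex single-variable function satisfies the bound constraint. The only (immaterial) difference is which variable you eliminate: you set $s_i = b_i^2/z_i$ and minimize over $z_i \in (0,1]$, whereas the paper sets $z_i = b_i^2/s_i$ and minimizes over $s_i \geq b_i^2$; both give the same case split $\delta_i b_i^2 \lessgtr 2\lambda$, and your treatment of the degenerate cases $b_i = 0$ and $\delta_i = 0$ matches the paper's.
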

\begin{proof}
Observe that 
the objective function in (\ref{eq:PR}) is 
\[
\frac{1}{2} \|Xb - y \|_2^2 + \sum_{i} \frac{1}{2} \delta_i \left(s_i-b_i^2\right) + \lambda z_i.
\]
Then (\ref{eq:PR}) can be reformulated as a regularized regression problem 
\begin{equation}\label{eq:PRreg}
\min_{b} \ \frac{1}{2} \|Xb - y \|_2^2 + \sum_{i} \rho_{\delta_i} (b_i;\lambda), \tag{$PR_\delta:reg$}
\end{equation}
where 
\begin{equation} \label{eq:rhodef}
\rho_{\delta_i} (b_i; \lambda) = \min_{s_i z_i \geq b_i^2, s_i \geq 0, z_i \in [0,1]}\frac{1}{2} \delta_i \left(s_i-b_i^2\right) + \lambda z_i.
\end{equation}
We can derive an explicit, closed form for $\rho_{\delta_i}(b_i;\lambda)$. If $\delta_i = 0$, it is easy to see that $\rho_{\delta_i} (b_i; \lambda) = 0$. We then focus on the case
$\delta_i > 0$.
When $b_i = 0$, it is again easy to see that the optimal solution to \eqref{eq:rhodef} is attained at $s_i = z_i = 0$, and
$\rho_{\delta_i} (0; \lambda) = 0$. When $b_i \neq 0$, 
by the constraint $s_i z_i \geq b_i^2$ and $z_i \in [0,1]$ we must have $s_i \geq b_i^2$, and $z_i$ must take the value $z_i = \frac{b_i^2}{s_i}$ in an optimal solution.  Therefore, the minimization problem in \eqref{eq:rhodef} becomes a one-dimensional problem
\[
\rho_{\delta_i} (b_i; \lambda) = \min_{ s_i \geq b_i^2} \frac{1}{2} \delta_i \left(s_i-b_i^2\right) + \lambda \frac{b_i^2}{s_i}.
\]
Since $\frac{1}{2} \delta_i \left(s_i-b_i^2\right) + \lambda \frac{b_i^2}{s_i}$ is a 
convex function of $s_i$ when $s_i \geq b_i^2 > 0$, its minimum is attained at $s_i = \sqrt{\frac{2\lambda b_i^2}{\delta_i}}$ when $\sqrt{\frac{2\lambda b_i^2}{\delta_i}}
\geq b_i^2$, and $s_i = b_i^2$ when $\sqrt{\frac{2\lambda b_i^2}{\delta_i}} \leq b_i^2$. Therefore
\begin{equation*}
\rho_{\delta_i}(b_i; \lambda) = 
\begin{cases}
\sqrt{2\delta_i\lambda} |b_i| - \frac{1}{2} \delta_i b_i^2, &  \ if  \ \delta_i b_i^2 \leq 2\lambda ; \\
\lambda, & \ if \ \delta_i b_i^2 > 2\lambda.
\end{cases}
\end{equation*}
Note that this formula also holds when $\delta_i = 0$ or $b_i = 0$.
\qed
\end{proof}

The penalty function (\ref{eq:PRpenalty}) is a nonconvex function of $b_i$. However, (\ref{eq:PR}), as well as (\ref{eq:PRreg}), is a convex problem as long as $X^T X - \diag(\delta)$
is positive semidefinite. Intuitively, the nonconvexity in $\rho_{\delta_i}(\cdot)$ is compensated by the (strict) convexity
of $\|Xb-y\|_2^2$.

Since (\ref{eq:PR}) is derived from a convex relaxation of a binary formulation of (\ref{eq:l0}), it 
is not a surprise that in
the equivalent penalization form, $\rho_{\delta_i} (b_i)$ is an underestimation of $\lambda \cdot \mathbb{1}_{b_i \neq 0}$, where
\[ \mathbb{1}_{t \neq 0} := \left\{ \begin{array}{cl} 1, \ \ & t \neq 0\\
  0, \ \ & \mbox{otherwise.} \end{array} \right. \]
In fact, 
it suffices to verify this for the first case in (\ref{eq:PRpenalty}). Indeed, 
\[
\rho_{\delta_i}(b_i; \lambda) = \sqrt{2\delta_i \lambda} |b_i| - \frac{1}{2}\delta_i b_i^2 = \lambda -\left(\sqrt{\frac{\delta_i}{2}}\left|b_i\right| - \sqrt{\lambda}\right)^2 \leq \lambda.
\]

\begin{remark}
In fact, the formula (\ref{eq:PRpenalty}) is a rediscovery of the Minimax Concave Penalty (MCP) proposed
by Zhang \cite{Zhang2010}.  Table \ref{tab:MCPnotation} demonstrates the translation between
our notation (parameters) and the notation used in \cite{Zhang2010} (we put a tilde over Zhang's notation to avoid confusion).
\begin{table}[htdp]
\caption{Translation of parameters between $\rho_{\delta_i} (\cdot; \cdot)$ and the MCP}
\begin{center}
\begin{tabular}{|c|c|}
\hline
Our notation & MCP \cite{Zhang2010} \\
\hline
\hline
$\delta_i$  & $1/\tilde{\gamma}$ \\
 $\lambda$ & $\frac{1}{2}\tilde{\gamma} \tilde{\lambda}^2$ \\
$\sqrt{2 \delta_i \lambda}$ & $\tilde{\lambda}$ \\
 $1/\delta_i$ & $\tilde{\gamma}$\\
 \hline
\end{tabular}
\end{center}
\label{tab:MCPnotation}
\end{table}%
Actually (\ref{eq:PRreg}) is slightly more general than MCP functions used in \cite{Zhang2010}, as Zhang uses one single parameter, $\tilde{\gamma}$, to control the concavity of every
penalty term, which corresponds to the special case of perspective relaxation where all $\delta_i$ chosen to be the same (and strictly positive).  Zhang also derived the condition for overall convexity, $X^T X - (1/\tilde{\gamma}) I \succeq 0$, which matches with our condition $X^T X - \diag(\delta) \succeq 0$. 
\end{remark}

Figure \ref{fig:MCP} illustrates the penalty function (\ref{eq:PRpenalty}) with $\lambda = 1$ and different choices of parameter $\delta_i$. With fixed $\delta_i$, this function is continuously differentiable at any 
nonzero value, and its second derivative is $-\delta_i$ when $b_i \in \left[-\sqrt{\frac{2\lambda}{\delta_i}},0\right)
\cup \left(0, \sqrt{\frac{2\lambda}{\delta_i}}\right]$. When $b_i$ is fixed, $\rho_{\delta_i}(b_i)$ is a concave function of $\delta_i$ when $\delta_i > 0$.

\begin{figure}[htdp]%
    \centering
    \subfloat[Penalty function]{{\includegraphics[width=7.1cm]{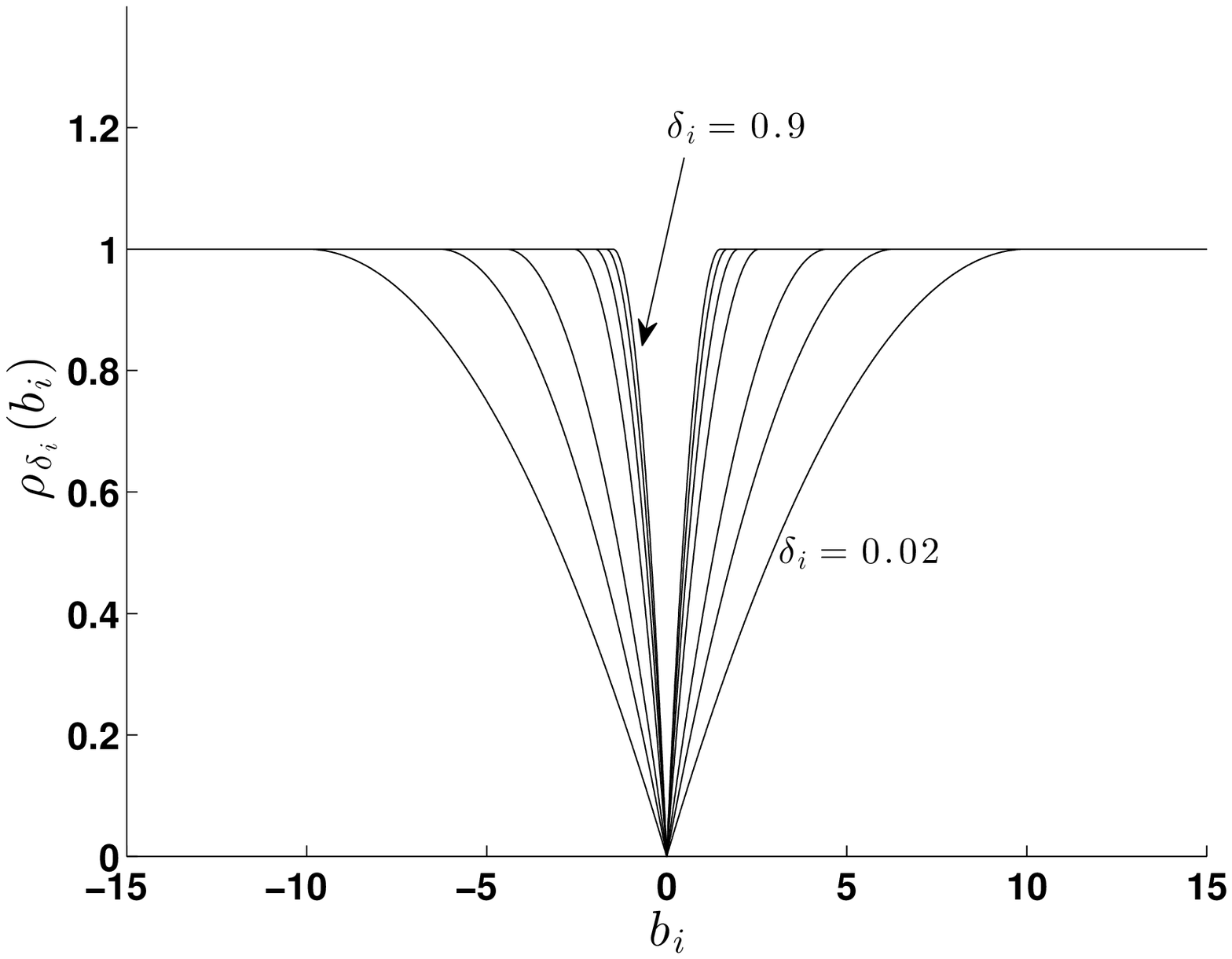} }}%
    \qquad
    \subfloat[Penalty gradient]{{\includegraphics[width=7.1cm]{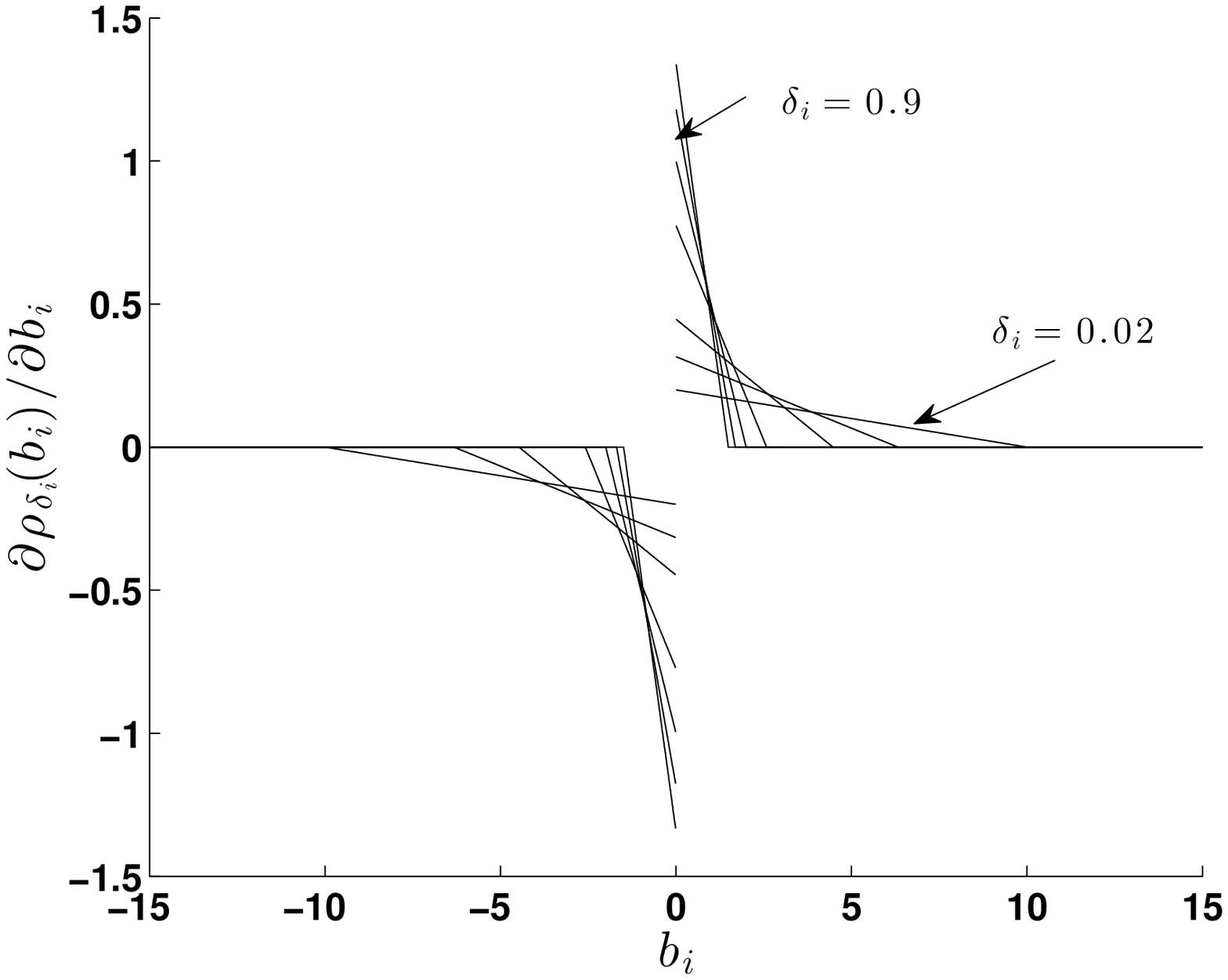} }}%
    \caption{Illustration of penalty function (\ref{eq:PRpenalty})}%
    \label{fig:MCP}%
\end{figure}

\begin{remark}\label{rmk:PWG}
We show that another convex relaxation proposed by Pilanci, Wainwright and El Ghaoui \cite{PilanciWainwrightGhaoui2015} is also a special case of perspective relaxation. They considered
the following $\ell_2$ - $\ell_0$ penalized problem with $\mu > 0$,
\begin{equation}\label{eq:l2l0}
\min_b \ \ \frac{1}{2}\|Xb - y\|_2^2 + \frac{1}{2} \mu \|b\|_2^2 + \lambda \|b\|_0, \tag{L2L0}
\end{equation}
and derived a convex relaxation\footnote{The difference of a constant factor from Corollary 3  in \cite{PilanciWainwrightGhaoui2015} is due to a typo in their derivation.},
\begin{equation}\label{eq:PWGpenalty}
\min_b \ \ \frac{1}{2} \left\|Xb - y\right\|_2^2 + 2 \lambda B\left(\sqrt{\frac{\mu}{2\lambda}} b_i\right)
\end{equation}
where $B$ denotes the reverse Huber penalty
\[
B(t)  = \begin{cases}|t| & \text{ if } |t| \leq 1, \\ \frac{t^2+1}{2} & \text{otherwise}. \end{cases}
\]

Now we derive a perspective relaxation for (\ref{eq:l2l0}) can show its equivalence to (\ref{eq:PWGpenalty}). Note that (\ref{eq:l2l0}) can be reformulated to (\ref{eq:l0}) by redefining the data matrices
\begin{equation}\label{eq:reformL2L0}
\min_b \ \ \frac{1}{2}\|\tilde{X} b - \tilde{y}\|_2^2 + \lambda \|b\|_0, \mbox{ where } \tilde{X} = \begin{bmatrix} X \\ \mu I_p\end{bmatrix}, \mbox{and} 
\tilde{y} = \begin{bmatrix} y \\ 0\end{bmatrix},
\end{equation}

Obviously we have $\tilde{X}^T \tilde{X} = X^T X + \mu I_p \succ 0$. To construct a perspective relaxation, one straightforward choice of $\delta$ is that $\delta_i = \mu, \forall i$, and by 
Theorem \ref{thm:pp} the perspective relaxation reads
\begin{align}
& \min_{b} \ \frac{1}{2} \|\tilde{X}b - \tilde{y} \|_2^2 + \sum_{i} \rho_{\mu} (b_i;\lambda) \notag \\
\Longleftrightarrow & \min_{b} \ \frac{1}{2} \|Xb - y \|_2^2 + \sum_{i} \left\{\rho_{\mu} (b_i;\lambda) + \frac{1}{2} \mu b_i^2 \right\} \label{pen:berhu}
\end{align}

We then verify that (\ref{pen:berhu}) is the same as relaxation (\ref{eq:PWGpenalty}). Indeed, 
if $\mu b_i^2 \leq 2 \lambda$, we have $ \left|\sqrt{\frac{\mu}{2\lambda}} b_i\right| \leq 1$, and 
\[
\rho_\mu(b_i; \lambda) + \frac{1}{2}\mu b_i^2 = \sqrt{2\mu\lambda} |b_i| = 2\lambda \cdot \left|\sqrt{\frac{\mu}{2\lambda}} b_i\right| = 2 \lambda B\left(\sqrt{\frac{\mu}{2\lambda}} b_i\right).
\]
Otherwise 
\[
\rho_\mu(b_i; \lambda) + \frac{1}{2}\mu b_i^2 = \lambda + \frac{1}{2}\mu b_i^2 = 2\lambda \cdot \frac{\left(\sqrt{\frac{\mu}{2\lambda}} b_i\right)^2+1}{2} 
=2 \lambda B\left(\sqrt{\frac{\mu}{2\lambda}} b_i\right).
\]

%
\end{remark}

\subsection{The Optimal Perspective Relaxation}\label{sec:SDP}
As in Theorem \ref{thm:pp}, the perspective relaxation is parameterized by a vector $\delta$, which takes value in a constrained set
$\left\{\delta \in \Rbb_+^{p}\middle| X^T X - \diag(\delta) \succeq 0\right\}$. In this section we aim to find the best parameter $\delta$ such that $\zeta_{PR(\delta)}$,
which is a lower bound of the optimal value of (\ref{eq:l0}), is as large as possible.
Intuitively, as $\delta_i \mapsto +\infty$,  $\rho_{\delta_i}(b_i;\lambda)$ converges pointwise to the indicator function $\lambda\mathbb{1}_{b_i \neq 0}$, we would like to choose the 
entries in $\delta$ large enough so that the condition $X^T X - \diag(\delta) \succeq 0$ is tight. Further  we wish to achieve the superemum of all lower bounds provided by perspective relaxations,
\begin{equation}\label{eq:maxmin}
\sup_{\delta \in \Rbb_+^{p}} \ \inf_{b} \left\{ \frac{1}{2} \|Xb - y\|_2^2 + \sum_{i} \rho_{\delta_i}(b_i;\lambda) \middle| X^T X - \diag(\delta) \succeq 0\right\}. \tag{Sup-Inf}
\end{equation} 
Alternatively one can simultaneously exploit the infinitely many penalty functions corresponding to all $\delta \in \Rbb^{p}_+$ and $X^T X - \diag(\delta) \succeq 0$, and replace the 
penalization term in (\ref{eq:PRreg}) by its \textit{pointwise superemum} over all admissible $\delta$,
\begin{equation}\label{eq:minimax}
\inf_{b} \ \sup_{\delta \in \Rbb_+^{p}}\left\{ \frac{1}{2} \|Xb - y\|_2^2 + \sum_{i} \rho_{\delta_i}(b_i;\lambda) \middle| X^T X - \diag(\delta) \succeq 0\right\}. \tag{Inf-Sup}
\end{equation}
We show that these two problems are equivalent using the minimax theory in convex analysis \cite{RockafellarCA}. Indeed one may immediately see that the optimal values of 
(\ref{eq:maxmin}) and (\ref{eq:minimax}) are the same, because of the fact that $\delta$ takes values in a compact set (Corollary 37.3.2 in \cite{RockafellarCA}). To show that a saddle point 
exists, we need the following theorem.
\begin{theorem}[Theorem 37.6 in \cite{RockafellarCA}]\label{thm:minimax}
Let $K(\delta, b)$ be a closed proper concave-convex function with effective domain $C \times D$. If both of the following conditions,
\begin{enumerate}
\item The convex functions $K(\delta, \cdot)$ for $\delta \in \ri(C)$ have no common direction of recession;
\item The convex functions $-K(\cdot, b)$ for $b \in \ri(D)$ have no common direction of recession;
\end{enumerate}
are satisfied, then $K$ has a saddle-point in $C\times D$. In other words, there exists $(\delta^*, b^*) \in C\times D$, such that 
\begin{align*}
\inf_{b \in D} \sup_{\delta \in C} K(\delta, b) &= \sup_{\delta \in C} \inf_{b \in D} K(\delta, b) = K(\delta^*, b^*).
\end{align*}
\end{theorem}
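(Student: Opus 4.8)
The plan is to recognize this as Rockafellar's two-sided saddle-point theorem and to follow the standard convex-analytic route: reduce the existence of a saddle point to the combination of attainment of the outer extrema on each side and the absence of a duality gap. To this end I would introduce the primal convex function $f(b) := \sup_{\delta \in C} K(\delta, b)$ and the dual concave function $g(\delta) := \inf_{b \in D} K(\delta, b)$. Since $K$ is closed proper concave-convex on $C \times D$, $f$ is a closed proper convex function and $g$ a closed proper concave function, and the trivial inequality $g(\delta) \le K(\delta, b) \le f(b)$ gives weak duality $\sup_{\delta \in C} g(\delta) \le \inf_{b \in D} f(b)$. A pair $(\delta^*, b^*)$ is a saddle point precisely when $g(\delta^*) = f(b^*)$ with both extrema attained there.

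First I would establish attainment. The role of condition (1) is exactly to force coercivity of $f$. For a pointwise supremum, the recession cone of $f$ is contained in the intersection of the recession cones of the convex slices $K(\delta, \cdot)$ taken over $\delta \in \ri(C)$; condition (1) says precisely that this intersection is trivial, so the recession cone of $f$ reduces to $\{0\}$. Hence every sublevel set $\{b : f(b) \le \alpha\}$ is bounded, and since $f$ is closed it attains its infimum at some $b^*$. Applying the same argument to the slices $-K(\cdot, b)$ under condition (2) yields a maximizer $\delta^*$ of $g$. The relative-interior qualifications $\delta \in \ri(C)$ and $b \in \ri(D)$ are what make these recession-cone identities hold without pathological boundary behaviour.

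The crux, and the step I expect to be the main obstacle, is closing the gap, i.e.\ upgrading weak duality to the equality $f(b^*) = g(\delta^*)$. Here I would invoke the conjugacy theory of saddle-functions: $K$, its upper closure, and its lower closure all share the same saddle value, and the two recession conditions are exactly what guarantee that passing to these closures introduces no gap. Concretely, one expresses the optimality condition $0 \in \partial f(b^*)$, via the subdifferential of a supremum, as the existence of a $\delta$ attaining the supremum that defines $f(b^*)$ together with a supporting certificate that $f(b^*) \le g(\delta)$; combined with weak duality this forces equality and identifies that $\delta$ with $\delta^*$. The delicate points are justifying the interchange of subdifferentiation and supremum and controlling the relative interiors, which is where the full strength of the closed-proper-concave-convex hypothesis is consumed.

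Finally, with $f(b^*) = g(\delta^*)$ in hand, I would read off the saddle point. The chain
\[
g(\delta^*) = \inf_{b \in D} K(\delta^*, b) \le K(\delta^*, b^*) \le \sup_{\delta \in C} K(\delta, b^*) = f(b^*) = g(\delta^*)
\]
collapses, forcing $K(\delta^*, b^*) = f(b^*) = g(\delta^*)$. The saddle inequalities $K(\delta, b^*) \le K(\delta^*, b^*) \le K(\delta^*, b)$ then follow immediately from the definitions of $f$ and $g$, and the common value equals $\inf_{b \in D} \sup_{\delta \in C} K(\delta, b) = \sup_{\delta \in C} \inf_{b \in D} K(\delta, b)$, as claimed.
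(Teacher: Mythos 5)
Your proposal is not competing with a proof in the paper: the paper imports this statement verbatim as Theorem 37.6 of Rockafellar's \emph{Convex Analysis} \cite{RockafellarCA} and uses it as a black box, so what you are really attempting is a reconstruction of a textbook theorem whose proof rests on the conjugacy theory of saddle functions (Sections 33--37 there). Your architecture --- set $f(b):=\sup_{\delta\in C}K(\delta,b)$, $g(\delta):=\inf_{b\in D}K(\delta,b)$, get attainment on each side from the two recession conditions, then close the duality gap --- is the right skeleton, and the attainment step is essentially sound: $\epi f=\bigcap_{\delta\in C}\epi K(\delta,\cdot)$, so when $f$ is proper its recession cone is contained in $\bigcap_{\delta\in\ri(C)}0^{+}\bigl(\epi K(\delta,\cdot)\bigr)$, condition (1) forces it to be $\{0\}$, the level sets of the closed function $f$ are then bounded, and the infimum is attained; symmetrically for $g$. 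One caveat even here: you assert that $f$ is \emph{proper} as a consequence of $K$ being closed proper concave-convex, and that inference is false --- for $K(\delta,b)=\delta+b$ on $\Rbb\times\Rbb$ one gets $f\equiv+\infty$ --- so finiteness of $\inf_b\sup_\delta K$ is itself something that must be extracted from the recession hypotheses, not assumed.

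The genuine gap is the step you yourself flag as the crux: the minimax equality $\inf_b\sup_\delta K=\sup_\delta\inf_b K$. Your justification --- that $K$ and its closures share a saddle value and that ``the two recession conditions are exactly what guarantee that passing to these closures introduces no gap'' --- is a restatement of the conclusion, not an argument. The concrete mechanism you offer instead, namely $0\in\partial f(b^*)$ together with the subdifferential-of-a-supremum rule producing a $\delta$ attaining the supremum, breaks down precisely under the theorem's hypotheses: $C$ is closed but not assumed compact (compactness of $C$ is what the paper's Corollary 37.3.2 application uses, but it is not available in Theorem 37.6 itself), so the supremum defining $f(b^*)$ need not be attained at any $\delta$, and the sup-rule for subdifferentials requires compactness of the index set and upper semicontinuity in the index. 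This is exactly where Rockafellar's proof does its real work, through the conjugate saddle function: the recession conditions on $K$ translate into interiority conditions on the effective domains of the conjugate, which simultaneously deliver finiteness of the saddle value and equality of the two optimal values (Theorems 37.3 and 37.5, on which 37.6 rests). If you wanted a more self-contained route, you could observe that the recession cones of the slices $K(\delta,\cdot)$, $\delta\in\ri(C)$, are closed cones whose intersection is $\{0\}$, so by compactness of the unit sphere finitely many $\delta_1,\dots,\delta_m$ already have no common recession direction, making $\max_i K(\delta_i,\cdot)$ coercive and opening the door to a Helly/level-set minimax argument; but some such substantive idea must be supplied --- weak duality plus attainment on both sides does not by itself yield the equality.
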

The following theorem applies Theorem \ref{thm:minimax} in our context.
\begin{theorem}
Let $\zeta_{\sup\inf}$ and $\zeta_{\inf\sup}$ be the optimal values of (\ref{eq:maxmin}) and (\ref{eq:minimax}) respectively. We have $\zeta_{\sup\inf} = \zeta_{\inf\sup}$. If 
$X^T X \succ 0$, then there exists $\delta^* \in \left\{\delta \in \Rbb^{p}_+ \middle| X^T X - \diag(\delta) \succeq 0 \right\} $ and $b^* \in \Rbb^p$
such that 
\[
\zeta_{\sup\inf} =\frac{1}{2} \|Xb^* - y\|_2^2 + \sum_{i} \rho_{\delta_i^*}(b^*_i; \lambda) = \zeta_{\inf\sup}.
\]
\end{theorem}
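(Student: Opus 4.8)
The plan is to realize the common objective as a single saddle function and invoke Theorem~\ref{thm:minimax} (Rockafellar's Theorem 37.6). I would set
\[
K(\delta, b) = \frac{1}{2}\|Xb - y\|_2^2 + \sum_{i}\rho_{\delta_i}(b_i;\lambda), \qquad C = \left\{\delta \in \Rbb_+^{p}\middle| X^T X - \diag(\delta)\succeq 0\right\}, \quad D = \Rbb^p.
\]
First I would check that $K$ is a closed proper concave-convex function with effective domain $C\times D$. For fixed $\delta\in C$ it is convex in $b$, since this is exactly the convex perspective relaxation underlying Theorem~\ref{thm:pp}; for fixed $b$ it is concave in $\delta$, since each $\rho_{\delta_i}(b_i;\lambda)$ is concave in $\delta_i$ (as noted after the MCP remark) while the quadratic term is independent of $\delta$. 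Closedness and properness follow from continuity of the closed form \eqref{eq:PRpenalty} in both arguments.

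Next comes the key structural fact: $C$ is \emph{compact}. It is closed, being the intersection of the nonnegative orthant with the positive-semidefinite constraint, and it is bounded because $X^T X - \diag(\delta)\succeq 0$ forces every diagonal entry to be nonnegative, i.e. $0 \le \delta_i \le (X^T X)_{ii}$. With $C$ compact, the equality $\zeta_{\sup\inf} = \zeta_{\inf\sup}$ follows from Corollary 37.3.2 of \cite{RockafellarCA}, exactly as indicated in the discussion preceding the theorem; note this part uses only compactness of $C$, which holds whether or not $X^T X\succ 0$.

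To produce an actual saddle point under $X^T X\succ 0$, I would verify the two recession hypotheses of Theorem~\ref{thm:minimax}. For condition~(1), fix $\delta\in\ri(C)$; since $\rho_{\delta_i}\ge 0$ we have $K(\delta,b)\ge \frac12\|Xb-y\|_2^2$, and $X^T X\succ 0$ makes the right-hand side coercive in $b$, so $K(\delta,\cdot)$ has bounded sublevel sets and its recession cone reduces to $\{0\}$; hence the family $\left\{K(\delta,\cdot)\mid\delta\in\ri(C)\right\}$ has no common nonzero direction of recession. For condition~(2), fix $b\in\ri(D)=\Rbb^p$; the convex function $-K(\cdot,b)$ has effective domain contained in the bounded set $C$, so no nonzero direction can be a direction of recession, and again there is no common nonzero direction of recession. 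Theorem~\ref{thm:minimax} then supplies $(\delta^*,b^*)\in C\times D$ with $\zeta_{\inf\sup}=K(\delta^*,b^*)=\zeta_{\sup\inf}$, which is precisely the claimed saddle-point identity.

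The main obstacle I anticipate is not the recession conditions, which are short once coercivity and boundedness of $C$ are in hand, but the careful bookkeeping needed to confirm that $K$ is genuinely a closed proper concave-convex saddle function with effective domain exactly $C\times D$. In particular I would need to check that the piecewise formula \eqref{eq:PRpenalty} is jointly continuous across the boundary $\delta_i b_i^2 = 2\lambda$ and at $\delta_i=0$, and that passing to $\ri(C)$ does not shrink the domain in a way that compromises the coercivity estimate used for condition~(1).
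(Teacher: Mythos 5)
Your proposal is correct and follows essentially the same route as the paper: the same saddle function $K$ on $C\times D$, Corollary 37.3.2 of \cite{RockafellarCA} (via compactness of $C$) for $\zeta_{\sup\inf}=\zeta_{\inf\sup}$, and Theorem~\ref{thm:minimax} with the same two recession-condition checks for the existence of $(\delta^*,b^*)$. The only differences are cosmetic refinements: your explicit bound $0\le\delta_i\le (X^TX)_{ii}$ for the boundedness of $C$, and your coercivity argument via $K(\delta,b)\ge \frac{1}{2}\|Xb-y\|_2^2$, which is a slightly cleaner justification than the paper's remark that $\rho_{\delta_i}(\cdot;\lambda)$ is eventually constant.
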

\begin{proof}
Define the sets $C := \left\{u\in \Rbb_+^{p} \middle| X^T X - \diag(u) \succeq 0\right\}$ and $D := \Rbb^p$, and a function on $\Rbb^{2 p}$,
\begin{equation}\label{eq:concave_convex}
K(\delta, b) := \begin{cases}
\frac{1}{2}\left\|X b - y\right\|_2^2 + \sum_{i} \rho_{\delta_i}(b_i; \lambda), &\ \ \forall  \delta \in C\\
-\infty, &\ \ \forall \delta \notin C
\end{cases}.
\end{equation}
Then $K(\delta, b)$ is concave in $\delta$ and convex in $b$, a so-called  
\textit{concave-convex function}. The \textit{effective domain} of $K(\cdot, \cdot)$ is 
\[
\left\{\delta \mid K(\delta, b) > -\infty, \forall b \right\} \times \left\{ b \mid K(\delta, b) <+\infty, \forall \delta \right\} = C \times D.
\] 
$K$ is \textit{proper}, as its effective domain is non-empty. For each fixed $b$, the function $K(\cdot, b)$ is upper-semicontinuous, and for each fixed $\delta \in C$, $K(\delta, \cdot)$ is lower-semicontinuous. Therefore $K(\delta, b)$ is both \textit{concave-closed} and \textit{convex-closed}, and is said to be \textit{closed} (e.g., section 34 \cite{RockafellarCA}).
Then $K(\cdot, \cdot)$, $C$ and $D$ satisfy assumptions in Corollary 37.3.2 in \cite{RockafellarCA}. Therefore  $\zeta_{\sup\inf} = \zeta_{\inf\sup}$. 
Further, for any $b \in \Rbb^{p}$, $-K(\cdot, b)$ has no direction of recession as 
$C$ is bounded. If $X^T X \succ 0$, for any $\delta \in \ri(C)$, the quadratic form $\|Xb-y\|_2^2$ is strictly convex, and $\rho_{\delta_i}(b_i; \lambda)$ is constant 
for $|b_i|$ sufficiently large. Therefore $K(\delta, \cdot)$ has no direction of recession, and by Theorem \ref{thm:minimax}, there exists $\delta^* \in C$ and $b^* \in D$ such that
\[
\zeta_{\sup\inf} =K(\delta^*, b^*) = \zeta_{\inf\sup},
\]
i.e., $(\delta^*,b^*)$ is a saddle point for (\ref{eq:maxmin}) and (\ref{eq:minimax}). \qed
\end{proof}

We now introduce a second convex relaxation of (\ref{eq:MIQP}), a semidefinite program (SDP), and we show that this semidefinite relaxation solves the minimax pair (\ref{eq:minimax}) and (\ref{eq:maxmin}). 

The problem (\ref{eq:MIQP}) can be equivalently formulated as the following convex problem:
\[
\min_{b, z, B} \ \ \frac{1}{2} \left\langle X^T X, B \right\rangle - y^T X b +\frac{1}{2} y^T y + \lambda \sum_{i} z_i, \ \ \mbox{s.t.}  \ \ \left(b, z, B\right) \in \conv(S_M),
\]
where set $S_M$ is defined as
\begin{equation}\label{eq:MIQP_cvx}
S_M :=  \left\{(b,z,B) \in \mathbb{R}^{2p+\frac{p(p+1)}{2}},
\begin{array}{l}
b \in [-M,M]^n, z \in \{0,1\}^{p},\\
B = b b^T, 
|b_i| \leq M z_i, \forall i\end{array}
\right\}.
\end{equation}

Convex relaxations of (\ref{eq:MIQP}) can be constructed by 
relaxing the set $\conv(S_M)$. Since $(b, z, B) \in S_M$ if and only if $(M^{-1} b, z, M^{-2} B) \in S_1$,
we may focus on convex relaxations of $S_1$. 
Furthermore, since the data matrix $X^T X$ is always 
positive semidefinite and there is no other constraint on $B$, we may replace the nonconvex condition $B = bb^T$ with the convex constraints $B \succeq bb^T$.  
Therefore, without loss of generality,
\exclude{the constraint in (\ref{eq:MIQP_cvx}) can be replaced by 
$\left(M^{-1} b, z, M^{-2} B\right) \in \conv\left(S_1^{\succeq}\right)$, where the convex hull is 
  defined as follows,}
we may seek  relaxations of the set 
\[
\conv\left(S_1^{\succeq}\right) := 
\conv \left\{(b,z,B) \in \mathbb{R}^{2p+\frac{p(p+1)}{2}},
\begin{array}{l}
b \in [-1, 1]^n, z \in \{0,1\}^{p},\\
B \succeq b b^T, 
|b_i| \leq z_i, \forall i \end{array}
\right\}.
\]
One strategy for obtaining valid inequalities for
$\conv\left(S_1^{\succeq}\right)$ is to strengthen, or lift, valid
inequalities for the set
\begin{equation}\label{eq:projConvS}
\Qcal := \left\{(b, B) \in \Rbb^{p+\frac{p(p+1)}{2}}\middle| b \in [-1,1]^n, B \succeq bb^T \right\}
 = \Proj_{(b,B)} \conv\left(S_1^{\succeq}\right).
\end{equation}
The simplest class of such lifted inequalities are probably the
perspective inequalities, $B_{ii} z_i \geq b_i^2, \forall i $, which are lifted 
from the inequalities $B_{ii} \geq b_i^2$ valid for $\Qcal$ \cite{dong.linderoth:13}. 
The following proposition shows that such lifted constraints, together with $\Qcal$, 
captures a certain class of valid linear inequalities for $\conv\left(S_1^{\succeq}\right)$.

\begin{proposition}\label{thm:sep}
Any linear inequality $\langle \Gamma, B \rangle + \alpha^T b +
\sum_{i } \gamma_i z_i \geq \tau$ valid for
$\conv\left\{S_1^{\succeq}\right\}$ that satisifies one of the following properties:
\begin{enumerate}
\item $\Gamma$ is diagonal;
\item $\gamma_i = 0$, $\forall i$;
\end{enumerate}
 is also valid for the convex set
\[  \Rcal := \left\{(b,z,B) \in [-1,1]^p \times \Rbb^{p + \frac{p(p+1)}{2}}\middle|
B \succeq b b^T, B_{ii} z_i \geq b_i^2, 0\leq z_i \leq 1, \forall i\right\}.
\]
\end{proposition}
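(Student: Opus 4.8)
The plan is to recast validity over $\Rcal$ as a statement about minima: the inequality holds on $\Rcal$ exactly when the minimum of the linear form $\langle\Gamma,B\rangle+\alpha^Tb+\sum_i\gamma_iz_i$ over $\Rcal$ is at least $\tau$. Since every point of $S_1^{\succeq}$ satisfies the lifted perspective constraints $B_{ii}z_i\geq b_i^2$ (if $z_i=1$ this is $B_{ii}\geq b_i^2$, which follows from $B\succeq bb^T$; if $z_i=0$ then $|b_i|\leq z_i$ forces $b_i=0$ and both sides vanish), we have $S_1^{\succeq}\subseteq\Rcal$, and convexity of $\Rcal$ gives $\conv(S_1^{\succeq})\subseteq\Rcal$. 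Thus the two minima compare in the expected direction, and it suffices to show that no point of $\Rcal$ does strictly better than the points already available in $\conv(S_1^{\succeq})$. The leverage in both cases is that an admissible inequality depends on only a strict subset of the coordinates, so I would pass to the corresponding linear projection and match it against the same projection of $\conv(S_1^{\succeq})$.

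For case (2), where $\gamma_i=0$ for all $i$, the form $\langle\Gamma,B\rangle+\alpha^Tb$ depends only on $(b,B)$. I would first record the elementary identity $\Proj_{(b,B)}\Rcal=\Qcal$: any $(b,B)\in\Qcal$ extends to a feasible point of $\Rcal$ by setting $z_i=1$, and conversely the $(b,B)$-part of any point of $\Rcal$ lies in $\Qcal$. Combined with the stated identity $\Qcal=\Proj_{(b,B)}\conv(S_1^{\succeq})$ from \eqref{eq:projConvS}, validity on $\conv(S_1^{\succeq})$ forces validity on $\Qcal$, hence on $\Proj_{(b,B)}\Rcal$, hence on all of $\Rcal$. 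This case is routine and needs no rounding argument.

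For case (1), where $\Gamma$ is diagonal, the form $\sum_i\Gamma_{ii}B_{ii}+\alpha^Tb+\sum_i\gamma_iz_i$ depends only on $(b,z,\diag B)$, so I would project along $\pi(b,z,B):=(b,z,\diag B)$. Because the constraints cutting out $\pi(\Rcal)$ and $\conv(\pi(S_1^{\succeq}))=\pi(\conv(S_1^{\succeq}))$ are Cartesian products over the index $i$, the validity question decouples into $p$ scalar problems, and the key computation I would carry out is the convex hull of the single-variable set $\{(b_i,z_i,s_i): z_i\in\{0,1\},\ |b_i|\leq z_i,\ s_i\geq b_i^2\}$, which equals $\{(b_i,z_i,s_i): s_iz_i\geq b_i^2,\ 0\leq z_i\leq 1,\ |b_i|\leq z_i,\ s_i\geq 0\}$, exhibiting the perspective inequality as the lifting of $s_i\geq b_i^2$ promised in the discussion preceding the statement. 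The main obstacle is exactly here: the per-coordinate set inherited from $\Rcal$ keeps only $|b_i|\leq 1$ rather than $|b_i|\leq z_i$, so it is strictly larger than this scalar convex hull, and I would have to show that the coefficient pattern of an admissible diagonal inequality prevents the scalar minimizer from exploiting the enlarged region, i.e.\ that an optimal $(b_i,z_i,s_i)$ over the $\Rcal$-coordinate set can be taken with $|b_i|\leq z_i$ and hence inside $\conv(\pi(S_1^{\succeq}))$. Carrying out this reduction via a sign/monotonicity analysis of the scalar minimization is the crux; once it is established, per-coordinate validity sums to global validity and the claim follows.
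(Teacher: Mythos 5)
Your setup and your case (2) are correct, and case (2) is handled exactly as in the paper (project out $z$ and use $\Qcal = \Proj_{(b,B)}\conv\left(S_1^{\succeq}\right)$). The genuine gap is the step you explicitly defer in case (1): that an admissible diagonal inequality cannot be violated on the part of the per-coordinate set where $|b_i| > z_i$. No sign or monotonicity analysis can close this gap, because the claim is false --- and with it the proposition as stated. Consider the inequality $z_1 - b_1 \ge 0$, i.e.\ $\Gamma = 0$ (diagonal), $\alpha = -e_1$, $\gamma = e_1$, $\tau = 0$, where $e_1$ denotes the first standard unit vector. It is valid for $\conv\left(S_1^{\succeq}\right)$, since every point of $S_1^{\succeq}$ satisfies $|b_1| \le z_1$. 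But for any $\epsilon \in (0,1)$ the point
\[
b = e_1, \qquad z = \epsilon\, e_1, \qquad B = \diag(1/\epsilon, 0, \dots, 0)
\]
belongs to $\Rcal$, because $B - bb^T = \diag(1/\epsilon - 1, 0, \dots, 0) \succeq 0$, $B_{11} z_1 = 1 = b_1^2$, and the remaining constraints hold trivially; yet $z_1 - b_1 = \epsilon - 1 < 0$. This is precisely a point with $|b_1| > z_1$ exploiting the enlarged region you identified, so the reduction you hoped to carry out is impossible.

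The comparison with the paper is instructive: its proof of case (1) commits exactly the error you were worried about. The paper's inequality chain equates the separable binary minimum with $\sum_{i}\min\left\{\Gamma_{ii} B_{ii} + \alpha_i b_i + \gamma_i z_i\right\}$ taken over $\left\{b_i \in [-1,1],\ B_{ii} z_i \ge b_i^2,\ 0 \le z_i \le 1,\ B_{ii}\ge 0\right\}$, invoking the G\"unl\"uk--Linderoth convex hull characterization; but that hull is the set you wrote down, which retains the linking constraint $|b_i| \le z_i$, and the set in the paper's chain is a strict superset of it, over which the minimum can be strictly smaller (in the example above the hull minimum is $0$ while the infimum over the paper's set is $-1$). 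The proposition becomes true --- and both your scalar decomposition and the paper's chain then go through --- if $\Rcal$ is redefined to include the constraints $|b_i| \le z_i$ for all $i$: one still has $\conv\left(S_1^{\succeq}\right) \subseteq \Rcal$; validity on $\conv\left(S_1^{\succeq}\right)$ forces $\Gamma \succeq 0$ (hence $\Gamma_{ii} \ge 0$) because the directions $(0,0,D)$ with $D \succeq 0$ are recession directions; and then each triple $(b_i, B_{ii}, z_i)$ coming from the corrected $\Rcal$ lies in your scalar hull, so its contribution is at least the corresponding binary per-coordinate minimum, and summing over $i$ gives at least $\tau$. As stated, however, neither your argument nor the paper's can be completed, because the statement itself is false.
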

\begin{proof}

Suppose that $\langle \Gamma, B \rangle + \alpha^T b + 
\sum_{i} \gamma_i z_i \geq \tau$ is valid for $\conv\left(S_1^{\succeq}\right)$,
and $\Gamma$ is diagonal.  We show that it is valid for $\Rcal$ with the following inequality chain:
\begin{align*}
\tau &\leq \min_{\substack{b \in [-1,1]^p, \forall i, \\ |b_i| \leq z_i \in \{0,1\}, \forall i}} \left\{\sum_{i=1}^p \Gamma_{ii} b_i^2 + \alpha_i b_i + \gamma_i z_i\right\} \\
& = \sum_{i=1}^p 
\min_{\substack{b_i \in [-1,1], \forall i, \\ |b_i| \leq z_i \in \{0,1\}, \forall i}} \left\{\Gamma_{ii} b_i^2 + \alpha_i b_i + \gamma_i z_i\right\} \\
&= \sum_{i=1}^p \min_{ \substack{b_i\in [-1,1], \forall i, \\ B_{ii} z_i \geq b_i^2, 0\leq z_i \leq 1, B_{ii} \geq 0 \forall i}} \left\{\Gamma_{ii} B_{ii} + \alpha_i b_i + \gamma_i z_i\right\}.
\end{align*}

The first equality is because of the separability of the minimization problem, while the second
equality is due to the convex hull characterization of the following set in $\Rbb^3$,
\[
\left\{\left(b_i, b_i^2, z_i\right) \middle| b_i \in [-1,1], |b_i| \leq z_i, z_i \in \{0,1\}
\right\}.
\]
See, for example, \cite{Gunluk_Linderoth_2010} for a proof.  Therefore 
$\langle \Gamma, B \rangle + \alpha^T b + \sum_{i} \gamma_i z_i \geq \delta$ must be valid for 
$\Rcal$.

On the other hand, if $\langle \Gamma, B \rangle + \alpha^T b + 
\sum_{i} \gamma_i z_i \geq \delta$ is valid for $\conv\left(S_1^{\succeq}\right)$,
and $\gamma_i = 0, \forall i$, then by equation (\ref{eq:projConvS}), the inequality is also valid
for $\Rcal$. \qed
\end{proof}

By using $\Rcal$ as a convex relaxation for $S_1^{\succeq}$, a semidefinite relaxation for 
(\ref{eq:MIQP})  is
\[
\begin{aligned}
\min_{b, z, B} \ \ & \frac{1}{2} \left\langle X^T X, B \right\rangle - y^T X b + \frac{1}{2} y^Ty+ \lambda \sum_{i} z_i, \\
\ \ \mbox{s.t.}  &   B\succeq bb^T, \begin{bmatrix} z_i & b_i \\ b_i & B_{ii}\end{bmatrix} \succeq 0,  \forall i, \\
& M^{-1} b \in [-1,1]^p,
\end{aligned}
\]
Note that $z_i \geq 0$ is implied by the 2 by 2 positive semidefinite constraints. The upper bounds $z_i \leq 1$, although not explicitly imposed, must hold in optimal solutions. This is because 
in an optimal solution, $z_i$ must take the value  0 if $B_{ii}=0$, and value $\frac{b_i^2}{B_{ii}}$ if $B_{ii}\neq 0$, while $B\succeq bb^T$ implies $B_{ii} \geq b_i^2, \forall i$.
The only constraint where $M$ does not cancel is 
$M^{-1} b \in [-1,1]^p$. We may choose to relax this constraint, or in fact we can be show that, with the assumption of 
$X^T X \succ 0$, $M$ can be chosen large enough (and independent of $\lambda$), such that this constraint is never active. To see this, let $\hat{b} \in \arg\min_b \|X b - y\|_2^2$, then $( \hat{b}, \hat{z}, \hat{b}\hat{b}^T)$ (where $\hat{z}_i = 1,\forall i$), is feasible in the semidefinite relaxation above. Note that the objective function above is lower bounded by a strictly convex function of 
$b$ that is independent of $\lambda$, i.e., for any $(b, z, B)$ feasible, 
\begin{align*}
\frac{1}{2} \left\langle X^T X, B \right\rangle - y^T X b + \lambda \sum_{i} z_i + \frac{1}{2}y^T y&\geq \frac{1}{2}\|X b - y\|_2^2 . 
\end{align*}
Since $X^T X \succ 0$, there exists $\tilde{M}$ is sufficiently large such that for all $b$, $\|b\|_\infty \geq \tilde{M}$
\[
\frac{1}{2}\|X b - y\|_2^2  \geq \frac{1}{2} \left\langle X^T X, \hat{b}\hat{b}^T \right\rangle - y^T X \hat{b} + \frac{1}{2}y^T y+ \lambda \sum_{i} \hat{z}_i,
\]
therefore any feasible $(b, z, B)$ with some $b_i \geq \tilde{M}$ cannot be optimal.


In the rest of this paper we consider the following semidefinite relaxation,
\begin{equation}\label{eq:SDP}
\begin{aligned}
\zeta_{SDP} := \min_{b, z, B} \ \ & \frac{1}{2} \left\langle X^T X, B \right\rangle - y^T X b + \frac{1}{2} y^Ty + \lambda \sum_{i} z_i, \\
\ \ \mbox{s.t.}  &   B\succeq bb^T,  \begin{bmatrix} z_i & b_i \\ b_i & B_{ii}\end{bmatrix} \succeq 0,  \forall i. 
\end{aligned}\tag{SDP}
\end{equation}

If the convex constraint $B\succeq bb^T$ were replaced by $B = bb^T$, this is a reformulation of (\ref{eq:l0}). The following proposition can be used to certify when a solution to (\ref{eq:SDP})
also provides a global optimization solution to (\ref{eq:l0}). 
\begin{proposition}\label{prop:rank1}
Assume $\lambda > 0$. Let $(b^*, z^*, B^*)$ be an optimal solution to (\ref{eq:SDP}), then for all $i$, $z_i^* \in [0,1]$. Further, $z_i^*$ takes the value  
$\frac{\left(b_i^*\right)^2}{B^*_{ii}}$ if $B_{ii}^* \neq 0$, and value $0$ otherwise. If $B^*$ is a rank-1 matrix, then $z_i^*$ is binary for all $i$, and $b^*$ is an optimal solution to (\ref{eq:l0}).
\end{proposition}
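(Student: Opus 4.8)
The plan is to analyze the optimality conditions of (\ref{eq:SDP}) separately in the variable $z$, and then exploit the structure that emerges when $B^*$ has rank one. The key observation is that the variable $z$ appears in (\ref{eq:SDP}) only through the linear objective term $\lambda \sum_i z_i$ and through the $2\times 2$ semidefinite constraints $\left[\begin{smallmatrix} z_i & b_i \\ b_i & B_{ii}\end{smallmatrix}\right] \succeq 0$. Since $\lambda > 0$, the objective is strictly increasing in each $z_i$, so at optimality each $z_i^*$ is driven down to the smallest value permitted by its constraint while holding $b^*$ and $B^*$ fixed. I would therefore first hold $b = b^*$, $B = B^*$ fixed and solve the one-dimensional subproblem $\min_{z_i} \lambda z_i$ subject to the $2\times 2$ constraint.

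The next step is to extract the closed form for $z_i^*$ from that $2\times 2$ constraint. The determinant condition $z_i B_{ii} - b_i^2 \geq 0$ together with the diagonal conditions $z_i \geq 0$, $B_{ii} \geq 0$ characterizes positive semidefiniteness of the $2\times 2$ block. When $B_{ii}^* \neq 0$, positive semidefiniteness forces $z_i^* \geq (b_i^*)^2 / B_{ii}^*$, and since $\lambda > 0$ the minimizing choice is exactly $z_i^* = (b_i^*)^2 / B_{ii}^*$. When $B_{ii}^* = 0$, the same $2\times 2$ constraint forces $b_i^* = 0$ (otherwise the determinant $-b_i^2$ would be negative), so the constraint reduces to $z_i^* \geq 0$ and the minimizer is $z_i^* = 0$. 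To establish $z_i^* \in [0,1]$, I would invoke $B^* \succeq b^*(b^*)^T$, which (as already noted in the paragraph preceding (\ref{eq:SDP})) implies $B_{ii}^* \geq (b_i^*)^2$ for all $i$; dividing gives $z_i^* = (b_i^*)^2 / B_{ii}^* \leq 1$ in the nonzero case, and $z_i^* = 0 \leq 1$ otherwise.

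The final step handles the rank-one case. If $B^* = v v^T$ for some vector $v \in \Rbb^p$, then $B^* \succeq b^*(b^*)^T$ forces $b^* = v$ (a rank-one PSD matrix dominates $b^*(b^*)^T$ only when they coincide, which I would verify by noting both have the same one-dimensional range and comparing scalar multiples). Consequently $B_{ii}^* = (b_i^*)^2$ exactly for every $i$, so the formula from the previous step gives $z_i^* = (b_i^*)^2 / B_{ii}^* = 1$ whenever $b_i^* \neq 0$, and $z_i^* = 0$ whenever $b_i^* = 0$; in both cases $z_i^* \in \{0,1\}$. Since $B^* = b^*(b^*)^T$, the term $\langle X^T X, B^*\rangle$ equals $(b^*)^T X^T X b^*$, so the (\ref{eq:SDP}) objective collapses exactly to $\tfrac{1}{2}\|Xb^*-y\|_2^2 + \lambda\|b^*\|_0$, and the feasible region of (\ref{eq:SDP}) contains (via the embedding $b \mapsto (b, \mathbb{1}_{b_i\neq 0}, bb^T)$) every feasible point of (\ref{eq:l0}) at equal or greater objective value; hence the minimum of (\ref{eq:SDP}) is a lower bound for (\ref{eq:l0}) that $b^*$ attains, making $b^*$ optimal for (\ref{eq:l0}).

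I expect the main obstacle to be the rank-one deduction that $B^* = b^*(b^*)^T$ rather than merely $B^* \succeq b^*(b^*)^T$: one must argue carefully that a rank-one $B^*$ dominating $b^*(b^*)^T$ in the Loewner order cannot be a strict over-estimate, which follows from the fact that $B^* - b^*(b^*)^T \succeq 0$ has rank at most one while $B^*$ itself has rank one, forcing the range of $b^*(b^*)^T$ into that of $B^*$ and then equating the two via the diagonal identity $B_{ii}^* = (b_i^*)^2$. Everything else is a routine consequence of the $2\times 2$ determinant analysis.
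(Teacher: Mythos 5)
The first part of your argument coincides with the paper's proof and is correct: since $\lambda>0$ and $z_i$ enters (\ref{eq:SDP}) only through the term $\lambda\sum_i z_i$ and the constraint $\left[\begin{smallmatrix} z_i & b_i\\ b_i & B_{ii}\end{smallmatrix}\right]\succeq 0$, optimality forces each $z_i^*$ to its least feasible value, which the $2\times 2$ determinant analysis identifies as $(b_i^*)^2/B_{ii}^*$ when $B_{ii}^*>0$ and $0$ when $B_{ii}^*=0$ (the latter forcing $b_i^*=0$); the bound $z_i^*\le 1$ then follows from $B_{ii}^*\ge (b_i^*)^2$, implied by $B^*\succeq b^*(b^*)^T$.

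The rank-one step is where you have a genuine gap---you rightly flag it as the main obstacle, but your repair fails. Writing $B^*=vv^T$, range containment indeed gives $b^*=\alpha v$, but then $B^*-b^*(b^*)^T=(1-\alpha^2)vv^T\succeq 0$ only forces $\alpha^2\le 1$, not $\alpha^2=1$: the matrix $B^*=2\,b^*(b^*)^T$ is rank one and strictly dominates $b^*(b^*)^T$, so ``a rank-one PSD matrix dominates $b^*(b^*)^T$ only when they coincide'' is false. The ``diagonal identity $B_{ii}^*=(b_i^*)^2$'' you invoke to conclude is exactly the statement $B^*=b^*(b^*)^T$ being proved; nothing established earlier supplies it, so the argument is circular. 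In fairness, the paper's own proof asserts the identical implication without justification, and the implication cannot be salvaged even by appealing to optimality: for $p=n=1$, $X=1$, $y=\sqrt{2\lambda}$, the point $(b^*,z^*,B^*)=\left(\sqrt{2\lambda}/2,\,1/2,\,\lambda\right)$ is an optimal solution of (\ref{eq:SDP}) (feasibility is immediate, and by AM--GM applied to $\tfrac12 B+\lambda b^2/B$ every feasible point has objective value at least $\lambda$, which this point attains, as does $(0,0,0)$), yet $B^*$ has rank one, $z^*=1/2$ is not binary, and $b^*$ is not optimal for (\ref{eq:l0}) since its objective there is $5\lambda/4>\lambda$. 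What both your final step and the paper's actually need is the stronger hypothesis $B^*=b^*(b^*)^T$ (equivalently, that the full moment matrix $\left[\begin{smallmatrix} 1 & (b^*)^T\\ b^* & B^*\end{smallmatrix}\right]$ has rank one); under that hypothesis your concluding argument---the objective collapses to the (\ref{eq:l0}) objective and (\ref{eq:SDP}) is a relaxation of (\ref{eq:l0})---is correct and identical to the paper's.
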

\begin{proof}
As $z_i$ only appears in the objective and the constraint $\begin{bmatrix} z_i & b_i \\ b_i & B_{ii}\end{bmatrix} \succeq 0$, the smallest value $z_i$ can take 
is $\frac{b_i^2}{B_{ii}}$ if $B_{ii} > 0$, and $0$ otherwise. Note that $B_{ii}\geq b_i^2$ is implied by the constraint $B\succeq bb^T$, if 
$(b^*, z^*, B^*)$ were an optimal solution to (\ref{eq:SDP}), then for all $i$, $z_i^* \in [0,1]$, and $z_i^*$ takes the value  
$\frac{\left(b_i^*\right)^2}{B^*_{ii}}$ if $B_{ii}^* \neq 0$, and value $0$ otherwise.

If $B^*$ is a rank-1 matrix, by $B^* \succeq b^* \left(b^*\right)^T$ we have $B^* = b^* \left(b^*\right)^T$. Therefore $z_i^* = 1$ if $B_{ii}^* \neq 0$, and 
$z_i^*=0$ otherwise. It is then easy to see that
\[
\frac{1}{2} \left\langle X^T X, B^* \right\rangle - y^T X b^* + \lambda \sum_{i} z^*_i + 0.5 y^T y = \frac{1}{2}\left\| X b^* - y \right\|_2^2 + \left\| b^* \right\|_0,
\]
Since (\ref{eq:SDP}) is a relaxation of (\ref{eq:l0}),  
$b^*$ is an optimal solution to (\ref{eq:l0}).
\qed
\end{proof}
Similar to the case of perspective relaxations, (\ref{eq:SDP}) is meaningful only when $X^T X \succ 0$. Otherwise, if $X^T X$ has a nontrivial null space, e.g.,
$\exists y \neq 0, X^T X y = 0$, then by following the recession direction $B \mapsto B+\tau yy^T$, as $\tau \mapsto +\infty$, $B_{ii}$ may become arbitrarily large and $z_i \mapsto 0$ for all 
$i$ such that $y_i \neq 0$. 

The dual problem to (\ref{eq:SDP}) is
\begin{equation}\label{eq:DSDP}
\begin{aligned}
\zeta_{SDP} = \frac{1}{2} y^Ty + \sup_{\epsilon \in \Rbb, \alpha \in \Rbb^p, \delta, t\in \Rbb^{p}} &  \ \ \ 
-\frac{1}{2} \epsilon \\
\mbox{s.t.} & \ \ \  \begin{bmatrix}\epsilon & \alpha^T \\ \alpha & X^T X - \diag(\delta) \end{bmatrix} \succeq 0, 
\begin{bmatrix}\delta_i & t_i \\ t_i & 2\lambda \end{bmatrix} \succeq 0, \forall i, \\
& \ \ \ \alpha_i + \left(X^T y\right)_i + t_i= 0, \ \forall i.
\end{aligned}\tag{DSDP}
\end{equation}
It is easy to see that (\ref{eq:SDP}) is strictly feasible. With the assumption that $X^T X \succ 0$, the dual problem (\ref{eq:DSDP}) is also strictly feasible. 
Therefore strong duality holds and the optimal value is attained at some primal optimal solution $(b^*, z^*, B^*)$ and dual optimal solution 
$(\epsilon^*, \alpha^*, \delta^*,  t^*)$. 

The following theorem shows that we can solve the minimax pair (\ref{eq:minimax})
and (\ref{eq:maxmin}) by solving (\ref{eq:SDP}).
\begin{theorem}\label{thm:minimaxSDP}
Assuming $X^T X \succ 0$, a saddle point for the minimax pair (\ref{eq:minimax}) and (\ref{eq:maxmin}) can be obtained by 
solving the primal-dual pair of semidefinite programs,  (\ref{eq:SDP}) and (\ref{eq:DSDP}). Let  
 $(b^*, z^*, B^*)$  and $(\epsilon^*, \alpha^*, \delta^*, t^*)$ be optimal solutions to (\ref{eq:SDP}) and (\ref{eq:DSDP}), respectively, then
 $(\delta^*, b^*)$ is a saddle point for (\ref{eq:minimax}) and (\ref{eq:maxmin}).
\end{theorem}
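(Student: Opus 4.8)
The plan is to establish two things simultaneously: the value identity $\zeta_{SDP}=\zeta_{\sup\inf}=\zeta_{\inf\sup}$, and that the pair $(\delta^*,b^*)$ extracted from the primal–dual optima satisfies the saddle relations $\sup_{\delta\in C}K(\delta,b^*)=K(\delta^*,b^*)=\inf_{b}K(\delta^*,b)$, where throughout I abbreviate $K(\delta,b):=\tfrac12\|Xb-y\|_2^2+\sum_i\rho_{\delta_i}(b_i;\lambda)$ on $C\times\Rbb^p$ (with $C$ as in the preceding theorem) and $Q(\delta):=X^TX-\diag(\delta)$. Two earlier results do much of the conceptual work: Theorem~\ref{thm:pp} gives $\inf_b K(\delta,b)=\zeta_{PR(\delta)}$ for each feasible $\delta$, and the preceding theorem gives $\zeta_{\sup\inf}=\zeta_{\inf\sup}=:\zeta^*$. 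With these in hand it suffices to sandwich $\zeta_{SDP}$ between these quantities and read off the saddle relations.

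First I would prove $\zeta_{SDP}\ge\sup_{\delta\in C}K(\delta,b^*)$ by a direct algebraic bound evaluated at the primal optimum $(b^*,z^*,B^*)$. Splitting $\langle X^TX,B^*\rangle=\langle Q(\delta),B^*\rangle+\sum_i\delta_iB^*_{ii}$ and using $B^*\succeq b^*b^{*T}$ together with $Q(\delta)\succeq0$ to get $\langle Q(\delta),B^*\rangle\ge b^{*T}Q(\delta)b^*$, the optimal objective rearranges to
\[
\zeta_{SDP}\ \ge\ \tfrac12\|Xb^*-y\|_2^2+\sum_i\Big(\tfrac12\delta_i\big(B^*_{ii}-(b^*_i)^2\big)+\lambda z^*_i\Big)\ \ge\ K(\delta,b^*),
\]
where the last step uses that $(B^*_{ii},z^*_i)$ is feasible for the one-dimensional program defining $\rho_{\delta_i}(b^*_i;\lambda)$ in \eqref{eq:rhodef} (the $2\times2$ constraint gives $B^*_{ii}z^*_i\ge(b^*_i)^2$, $B^*_{ii}\ge0$, $z^*_i\in[0,1]$). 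Since this holds for every feasible $\delta$ with the \emph{same} point $b^*$, taking the supremum yields $\zeta_{SDP}\ge\sup_{\delta\in C}K(\delta,b^*)$; taking the infimum over $b$ first yields $\zeta_{SDP}\ge\sup_\delta\zeta_{PR(\delta)}=\zeta^*$.

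The crux, and the step I expect to be hardest, is the reverse inequality $\zeta_{SDP}\le\zeta^*$, which I would obtain from the dual. The idea is to decompose the maximization in \eqref{eq:DSDP} by fixing $\delta$ first: for each fixed feasible $\delta$, the slice of \eqref{eq:DSDP} in the remaining variables $(\epsilon,\alpha,t)$ is exactly the conic (Lagrangian) dual of the perspective relaxation \eqref{eq:PR} — the constraint $\bigl[\begin{smallmatrix}\delta_i & t_i\\ t_i & 2\lambda\end{smallmatrix}\bigr]\succeq0$ is the cone dual to the rotated-cone perspective constraint, while $\alpha_i=-(X^Ty)_i-t_i$ together with the Schur complement of $\bigl[\begin{smallmatrix}\epsilon & \alpha^T\\ \alpha & Q(\delta)\end{smallmatrix}\bigr]\succeq0$ reproduces the quadratic term. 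Because \eqref{eq:PR} is strictly feasible and its optimum is attained (Proposition~\ref{prop:PR_attain}), strong conic duality gives that this slice has optimal value exactly $\zeta_{PR(\delta)}$. Hence $\zeta_{SDP}=\sup_{\delta}\zeta_{PR(\delta)}=\zeta^*$, and since the dual optimum is attained at $(\epsilon^*,\alpha^*,\delta^*,t^*)$, the outer supremum is attained at $\delta^*$, so $\zeta_{PR(\delta^*)}=\zeta_{SDP}$. The technical care here is the duality computation itself and the degenerate case where $Q(\delta^*)$ is singular (expected, since tightness drives $\delta^*$ to the boundary of $C$): there one must argue with $\alpha^*\in\mathrm{range}\,Q(\delta^*)$ and pseudoinverses rather than $Q(\delta^*)^{-1}$, leaning on the attainment of the dual optimum guaranteed by the strong duality already established between \eqref{eq:SDP} and \eqref{eq:DSDP}.

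Finally I would assemble the sandwich. Combining the two directions with Theorem~\ref{thm:pp},
\[
\zeta_{SDP}\ \ge\ \sup_{\delta\in C}K(\delta,b^*)\ \ge\ K(\delta^*,b^*)\ \ge\ \inf_bK(\delta^*,b)\ =\ \zeta_{PR(\delta^*)}\ =\ \zeta_{SDP},
\]
so every term equals $\zeta_{SDP}=\zeta^*$. The outer equalities then give the saddle relations directly: $\sup_{\delta\in C}K(\delta,b^*)=K(\delta^*,b^*)$ shows $\delta^*$ maximizes $K(\cdot,b^*)$, while $K(\delta^*,b^*)=\inf_bK(\delta^*,b)$ shows $b^*$ minimizes $K(\delta^*,\cdot)$. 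By definition $(\delta^*,b^*)$ is therefore a saddle point of the pair \eqref{eq:minimax}--\eqref{eq:maxmin}, completing the proof.
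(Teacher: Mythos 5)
Your overall plan coincides with the paper's: bound $\zeta_{SDP}$ from below by $\sup_{\delta\in C}K(\delta,b^*)$ via the splitting $X^TX=Q(\delta)+\diag(\delta)$ (writing $Q(\delta)=X^TX-\diag(\delta)$ as in your proposal), bound it from above by $\zeta_{PR(\delta^*)}$ via the dual, and close the sandwich. Your first step and final assembly are sound, up to one small slip: $z_i^*\le 1$ is not implied by the $2\times2$ PSD constraint but by optimality, i.e.\ Proposition~\ref{prop:rank1}. The genuine gap is at the step you yourself call the crux. The slice of \eqref{eq:DSDP} at fixed $\delta$ is \emph{not} the conic dual of \eqref{eq:PR}; it is the dual of the weaker problem obtained from \eqref{eq:PR} by deleting the upper bounds $z_i\le 1$. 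The true dual of \eqref{eq:PR} carries a multiplier $\pi_i\ge 0$ for each constraint $z_i\le 1$: its $2\times2$ blocks read $\left[\begin{smallmatrix}\delta_i & t_i\\ t_i & 2(\lambda+\pi_i)\end{smallmatrix}\right]\succeq 0$ and its objective is $-\tfrac12\epsilon-\sum_i\pi_i$; the slice of \eqref{eq:DSDP} is the restriction $\pi\equiv 0$. Hence ``strong conic duality gives that this slice has optimal value exactly $\zeta_{PR(\delta)}$'' is false in general: Slater's condition gives strong duality between \eqref{eq:PR} and \emph{its} dual, not between \eqref{eq:PR} and the slice. Concretely, take $p=1$, $X=(1)$, $\delta=1$ (so $Q(\delta)=0$) and $(X^Ty)^2>2\lambda$: feasibility of the slice forces $\alpha=0$, hence $t=-X^Ty$, and then $\left[\begin{smallmatrix}\delta & t\\ t & 2\lambda\end{smallmatrix}\right]\succeq 0$ fails, so the slice is infeasible (value $-\infty$) while $\zeta_{PR(\delta)}$ is finite. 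This is not the technicality you anticipated about singular $Q(\delta)$ and pseudoinverses; it is the cap $z_i\le 1$ --- the very feature that makes $\rho_{\delta_i}$ flatten at $\lambda$ instead of decreasing to $-\infty$ --- that your duality identification drops. (A posteriori the slice value does equal $\zeta_{PR(\delta^*)}$ at the optimal $\delta^*$, but that is a consequence of the theorem, not something you may assume while proving it.)

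The repair is exactly the paper's argument, and it needs only \emph{weak} duality, which survives the missing cap multipliers. Since the dual optimum is attained at $(\epsilon^*,\alpha^*,\delta^*,t^*)$, pair $\left[\begin{smallmatrix}\epsilon^* & \alpha^{*T}\\ \alpha^* & Q(\delta^*)\end{smallmatrix}\right]\succeq 0$ with $\left[\begin{smallmatrix}1 & \bar b^T\\ \bar b & \bar b\bar b^T\end{smallmatrix}\right]\succeq 0$ and, for each $i$, $\left[\begin{smallmatrix}\delta^*_i & t^*_i\\ t^*_i & 2\lambda\end{smallmatrix}\right]\succeq 0$ with $\left[\begin{smallmatrix}\bar s_i & \bar b_i\\ \bar b_i & \bar z_i\end{smallmatrix}\right]\succeq 0$, where $(\bar b,\bar s,\bar z)$ is an arbitrary feasible point of \eqref{eq:PR} at $\delta=\delta^*$; nonnegativity of the inner products of PSD matrices, together with $\alpha^*_i=-(X^Ty)_i-t^*_i$, yields $\zeta_{SDP}=\tfrac12 y^Ty-\tfrac12\epsilon^*\le$ the \eqref{eq:PR} objective at $(\bar b,\bar s,\bar z)$, hence $\zeta_{SDP}\le\zeta_{PR(\delta^*)}$. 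Note this direction never uses $\bar z_i\le 1$, which is why setting $\pi\equiv 0$ costs nothing here. Substituting this for your per-slice strong-duality claim, your chain $\zeta_{SDP}\ge\sup_{\delta\in C}K(\delta,b^*)\ge K(\delta^*,b^*)\ge\inf_b K(\delta^*,b)=\zeta_{PR(\delta^*)}\ge\zeta_{SDP}$ closes, all terms are equal, and the saddle relations follow as you wrote.
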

\begin{proof}
Let $K(\cdot, \cdot)$ and $C$ be defined as in (\ref{eq:concave_convex}),
 $(b^*, z^*, B^*)$  and $(\epsilon^*, \alpha^*, \delta^*, t^*)$ be optimal solutions to (\ref{eq:SDP}) and (\ref{eq:DSDP}) respectively.
We would like to show that for all $\delta \in C$, $b \in \Rbb^{p}$,
\begin{equation}\label{eq:minmaxGoal}
\max_{\delta \in C} K(\delta, b^*) = \zeta_{SDP} = \min_{b\in \Rbb^p} K(\delta^*, b).
\end{equation}
Provided (\ref{eq:minmaxGoal}),  $(\delta^*, b^*)$ is a saddle point because
\[
K(\delta^*, b^*) \leq \max_{\delta \in C} K(\delta, b^*) = \min_{b\in \Rbb^p} K(\delta^*, b) \leq K(\delta^*, b^*),
\]
which implies
\[
K(\delta, b^*) \leq K(\delta^*, b^*) \leq K(\delta^*, b), \ \ \forall \delta \in C, b\in \Rbb^p.
\]

Note that by our derivation of (\ref{eq:PR}) and (\ref{eq:PRreg}), $\forall \hat{b} \in \Rbb^p, \zeta_{PR(\delta)} = \min_b K(\delta, b) \leq K(\delta, \hat{b})$. So the left and right end of (\ref{eq:minmaxGoal}) satisfy the following conditions,
\[
\max_{\delta \in C} \zeta_{PR(\delta)}\leq \max_{\delta \in C} K(\delta, b^*), \ \ 
\min_{b\in \Rbb^p} K(\delta^*, b) = \zeta_{PR(\delta^*)}.
\]
Therefore to prove (\ref{eq:minmaxGoal}), it suffices to show 
\[
\max_{\delta \in C} \zeta_{PR(\delta)}=\zeta_{SDP}=\zeta_{PR(\delta^*)}.
\]
Firstly, we show that for any admissible $\delta \in C$, $\zeta_{SDP} \geq \zeta_{PR(\delta)}$. In fact, we claim that for any solution $(\bar{b}, \bar{z}, \bar{B})$
feasible in (\ref{eq:SDP}), $(\bar{b}, \bar{z}, \bar{s})$ with $\bar{s} = \diag(\bar{B})$ is a feasible solution to 
(\ref{eq:PR}) with a no-larger objective value. To verify, one has
\[
\begin{aligned}
\frac{1}{2} \langle X^T X, \bar{B}\rangle - y^T X \bar{b} + \lambda \sum_{i} \bar{z}_i
&= \frac{1}{2} \langle X^T X, \bar{b}\bar{b}^T\rangle - y^T X \bar{b} + \lambda \sum_{i} \bar{z}_i + \frac{1}{2} \langle X^T X, \bar{B} - \bar{b}\bar{b}^T\rangle\\
& \geq \frac{1}{2} \langle X^T X, \bar{b}\bar{b}^T\rangle - y^T X \bar{b} + \lambda \sum_{i} \bar{z}_i + \frac{1}{2} \langle \diag(\delta), \bar{B} - \bar{b}\bar{b}^T\rangle\\
&= \frac{1}{2} \bar{b}^T (X^T X - \diag(\delta) ) \bar{b} - (X^T y)^T \bar{b} + \frac{1}{2}\sum_{i }\delta_i \bar{s}_i + 
\lambda \sum_{i} \bar{z}_i.
 \end{aligned}
\]
The inequality is due to the fact that $X^T X - \diag(\delta) \succeq 0$ and $\bar{B} - \bar{b}\bar{b}^T \succeq 0$. Therefore we have 
\[
\max_{\delta \in C} \zeta_{PR(\delta)} \leq \zeta_{SDP}.\]

Now we show that $\zeta_{SDP}\leq \zeta_{PR(\delta^*)}$, which will then complete the proof of (\ref{eq:minmaxGoal}) by
\[
\zeta_{PR(\delta^*)} \leq \max_{\delta \in C} \zeta_{PR(\delta)} \leq \zeta_{SDP} \leq \zeta_{PR(\delta^*)}.
\]
We achieve this by showing
the optimal value of (\ref{eq:DSDP}) is less than or equal to the objective value of any feasible solution
to ($PR(\delta^*)$).
Let $(\bar{b}, \bar{s}, \bar{z})$ denote a feasible solution to ($PR(\delta^*)$),
we have two sets of matrix inequalities
\begin{align}
\label{eq:psd1}\begin{bmatrix}\epsilon^* & \alpha^{*T} \\ \alpha^* & X^T X - \diag(\delta^*) \end{bmatrix} \succeq 0, & \ \ 
\begin{bmatrix}1 & \bar{b}^T \\ \bar{b} & \bar{b}\bar{b}^T\end{bmatrix} \succeq 0, \\
\label{eq:psd2}\begin{bmatrix}\delta_i^* & t_i^* \\ t_i^* & 2\lambda \end{bmatrix} \succeq 0, & \ \ 
\begin{bmatrix}\bar{s}_i & \bar{b}_i \\ \bar{b}_i & \bar{z}_i \end{bmatrix} \succeq 0, \ \ \forall i.
\end{align}
As the inner product between two matrices in (\ref{eq:psd1}) is nonnegative,  we have
\[
\begin{aligned}
\zeta_{SDP} - 0.5y^T y &= -\frac{1}{2} \epsilon^* \leq \alpha^{*T} \bar{b} + \frac{1}{2} \bar{b}\left(X^T X - \diag(\delta^*)\right)\bar{b} \\
& = -y^T X \bar{b} + \sum_{i} \bar{b}_i (-t_i^*) + \frac{1}{2} \bar{b}^T (X^T X - \diag(\delta^*))\bar{b},
\end{aligned}
\]
where the second equality is because of the constraints in (\ref{eq:DSDP}). Next by taking the 
inner product between the matrices in (\ref{eq:psd2}) we obtain
\[
\sum_{i} \bar{b}_i (-t_i^*) \leq \frac{1}{2} \sum_{i}  \delta_i^* \bar{s}_i  + \sum_{i}  \lambda \bar{z}_i.
\]
Therefore, 
\[
\zeta_{SDP}\leq \frac{1}{2} \bar{b}^T (X^T X - \diag(\delta^*))\bar{b}  - y^T X \bar{b} + \frac{1}{2} \sum_{i}  \delta_i^* \bar{s}_i  + \sum_{i}  \lambda \bar{z}_i
+\frac{1}{2}y^T y.
\]
Finally since $(\bar{b}, \bar{s}, \bar{z})$ is an arbitrarily chosen feasible solution, we have $\zeta_{SDP}\leq \zeta_{PR(\delta^*)}$. This completes our proof as previously discussed. \qed
\end{proof}

\begin{remark}\label{rmk:lambdamax}
We provide a remark regarding the computation of $\lambda_{\max}$, i.e., the largest sensible choice of parameter $\lambda$. 
In practice one is often interested in (\ref{eq:l0}) for all $\lambda \geq 0$. If we use (\ref{eq:SDP}) as an approximation to (\ref{eq:l0}), then it is crucial to know the 
smallest penalty parameter $\lambda$ that forces all $z_i$ to be $0$. This number is denoted by $\lambda_{\max}$ and is defined as,  
\[
\lambda_{\max} := \inf\left\{\lambda \ \middle| \ \left(0^{p}, 0^p, 0^{p\times p}\right) \mbox{ is optimal to } (\ref{eq:SDP}) \right\} 
\]
where $0^p$ is the $p\times 1$ zero vector and $0^{p\times p}$ is the $p\times p$ zero matrix. 
We show that $\lambda_{\max}$ can be computed by solving an optimization problem of complexity similar to that of (\ref{eq:SDP}). 
We first prove a checkable condition of when $\left(0^{p}, 0^p, 0^{p\times p}\right)$ is an optimal solution to (\ref{eq:SDP}).
\begin{proposition}
Assuming that $X^T X \succ 0$ and $\lambda > 0$, $\left(0^{p}, 0^p, 0^{p\times p}\right)$ is an optimal solution to (\ref{eq:SDP}) if and only if there exists $\delta \in \Rbb^p$
such that 
\[
X^T X - \diag(\delta) \succeq 0, \ \ \begin{bmatrix}\delta_i & -(X^T y)_i \\ -(X^T y)_i & 2 \lambda \end{bmatrix} \succeq 0, \ \forall i.
\]
\end{proposition}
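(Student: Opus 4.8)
The plan is to prove the equivalence by conic duality, using the dual program (\ref{eq:DSDP}) already derived and the strong duality established in the paragraph preceding the proposition. First I would note that $\left(0^p, 0^p, 0^{p\times p}\right)$ is feasible for (\ref{eq:SDP}) and attains objective value exactly $\frac{1}{2}y^T y$; since any feasible point gives an upper bound on the minimum, $\zeta_{SDP} \leq \frac{1}{2}y^T y$ always holds. Consequently $\left(0^p, 0^p, 0^{p\times p}\right)$ is optimal if and only if $\zeta_{SDP} = \frac{1}{2}y^T y$, i.e. $\zeta_{SDP} - \frac{1}{2}y^T y = 0$.

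Next I would invoke strong duality. As noted in the text, both (\ref{eq:SDP}) and (\ref{eq:DSDP}) are strictly feasible under $X^T X \succ 0$, so their optimal values coincide and the dual optimum is attained, yielding $\zeta_{SDP} - \frac{1}{2}y^T y = \sup\left\{-\frac{1}{2}\epsilon\right\}$ taken over dual-feasible $(\epsilon, \alpha, \delta, t)$. The key structural observation is that the dual constraint $\begin{bmatrix}\epsilon & \alpha^T \\ \alpha & X^T X - \diag(\delta)\end{bmatrix}\succeq 0$ forces $\epsilon \geq 0$, being a diagonal entry of a positive semidefinite matrix; hence $-\frac{1}{2}\epsilon \leq 0$ at every dual-feasible point. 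Therefore $\zeta_{SDP} - \frac{1}{2}y^T y = 0$ if and only if there exists a dual-feasible point with $\epsilon = 0$.

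Finally I would characterize dual feasibility when $\epsilon = 0$. Since a positive semidefinite matrix with a vanishing $(1,1)$ entry must have its entire first row and column equal to zero (checkable via the $2\times 2$ principal minors or a Schur-complement argument), $\epsilon = 0$ forces $\alpha = 0$ and leaves precisely the condition $X^T X - \diag(\delta) \succeq 0$. The linear constraint $\alpha_i + (X^T y)_i + t_i = 0$ then determines $t_i = -(X^T y)_i$, and substituting into the blocks $\begin{bmatrix}\delta_i & t_i \\ t_i & 2\lambda\end{bmatrix}\succeq 0$ produces exactly the matrix inequalities stated in the proposition. Thus the existence of a $\delta \in \Rbb^p$ with $X^T X - \diag(\delta)\succeq 0$ and $\begin{bmatrix}\delta_i & -(X^T y)_i \\ -(X^T y)_i & 2\lambda\end{bmatrix}\succeq 0$ for all $i$ is equivalent to the existence of a dual-feasible point with $\epsilon = 0$, which closes the chain of equivalences.

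I expect no serious obstacle, as the argument is essentially a translation of the dual feasibility condition; the only steps requiring care are the claim that a vanishing diagonal entry of a positive semidefinite matrix annihilates the corresponding row and column, and the bookkeeping that aligns the free dual variables $(\epsilon, \alpha, \delta, t)$ with the two matrix inequalities in the statement.
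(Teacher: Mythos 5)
Your proof is correct and follows essentially the same route as the paper's: both arguments use strong duality between (\ref{eq:SDP}) and (\ref{eq:DSDP}) to reduce optimality of the zero solution to the existence of a dual-feasible point with $\epsilon = 0$, then note that $\epsilon = 0$ forces $\alpha = 0$ and $t = -X^T y$, leaving exactly the stated matrix inequalities. In fact, your write-up spells out the steps the paper compresses into a one-line appeal to ``strong duality and complementarity conditions'' (the feasibility of the zero point with value $\tfrac{1}{2}y^T y$, the sign of $\epsilon$, and dual attainment), so no gap remains.
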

\begin{proof}
By strong duality of (\ref{eq:SDP}) and (\ref{eq:DSDP}), and complementarity conditions, $\left(0^{p}, 0^p, 0^{p\times p}\right)$ is an optimal solution if and only if there exists $(\epsilon, \alpha, \delta, t)$ feasible in (\ref{eq:DSDP})
such that $\epsilon = 0$. Therefore we must have $\alpha = 0$ and $t = -X^T y$ by the constraints in (\ref{eq:DSDP}), and our conclusion easily follows. \qed
\end{proof}

Then $\lambda_{\max}$ can be computed exactly by solving a semidefinite program:
\begin{equation}\label{eq:lambdaMaxSDP}
\lambda_{\max} = \min \left\{\lambda \middle| \exists \delta \in \Rbb^p,  X^T X - \diag(\delta) \succeq 0, \ \ \begin{bmatrix}\delta_i & -(X^T y)_i \\ -(X^T y)_i & 2 \lambda \end{bmatrix} \succeq 0, \ \forall i 
\right\}.
\end{equation}
\end{remark}

\section{A Probabilistic Interpretation of the Semidefinite Relaxation}\label{sec:prob}
All of our previous derivation of convex relaxations are in a deterministic manner. In this section we provide a probabilistic interpretation of the semidefinite relaxation (\ref{eq:SDP}). 
Especially, our analysis in this section gives insights in interpreting the matrix variable $B$, in addition to the deterministic understanding 
that it is an approximation of outer-product $b b^T$. This is especially useful when (\ref{eq:SDP}) is not an exact relaxation of (\ref{eq:l0}), and an optimal $B$ has high rank. 
Finally, a by-product result in this section shows that (\ref{eq:l0}) can be formulated as a linear program over a convex set related
 to the \textit{Boolean Quadric Polytope}, an important object in polyhedral combinatorics whose facial structure were heavily studied \cite{Padberg89,DezaLaurent1997}.

By considering $\beta$ as the entrywise product of a deterministic vector  and a multivariate Bernoulli random variable, we can
reformulate the deterministic optimization problem (\ref{eq:l0}) into a ``stochastic" form.
Specifically, we denote
\begin{equation}
\beta \in \Bcal_u :=\left\{ u \circ z \ \middle| \ u \in \Rbb^p, z \mbox{ is a discrete random variable defined on } \{0,1\}^p \right\}.
\end{equation}
Then (\ref{eq:l0}) is equivalent to the following stochastic form where one optimizes over $u$ and a class of probabilistic distributions specified by $u$.
\begin{proposition}(\ref{eq:l0}) is equivalent to the following problem,
\begin{align}\label{eq:distri_l0}
\min_{u \in \Rbb^p_{\neq 0}, \beta \in \Bcal_u} \ \ \E_{\beta} \left\{\frac{1}{2} \|X \beta - y \|_2^2 + \lambda \|\beta\|_0\right\},
\end{align}
where $\Rbb_{\neq 0}^p := \left\{u | u_i \neq 0, \forall i \right\}$ is the set of vectors in $\Rbb^p$ that have no component equal to zero, and $\Bcal_u$ denotes all rescaled Bernoulli random vectors where each $\beta_i$ takes value $0$ or $u_i$.  The equivalence is in the following sense: (1) every optimal solution to (\ref{eq:l0}) defines a singleton distribution of 
$\beta$, which is optimal to (\ref{eq:distri_l0}); and (2) every state with positive probability in an optimal solution to (\ref{eq:distri_l0}) is an optimal solution to (\ref{eq:l0}).
\end{proposition}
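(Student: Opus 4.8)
The plan is to run the elementary ``averaging'' principle: because $z$ ranges over the finite set $\{0,1\}^p$, every feasible random vector $\beta \in \Bcal_u$ is supported on finitely many points of $\Rbb^p$, so its expected objective is a convex combination of \emph{deterministic} objective values, each of which is bounded below by $\zeta_{L0}$. Write $f(\beta) := \frac{1}{2}\|X\beta - y\|_2^2 + \lambda\|\beta\|_0$ for the objective of \eqref{eq:l0}. First I would establish the lower bound. Fix any feasible pair $(u,\beta)$ with $u \in \Rbb^p_{\neq 0}$ and $\beta = u\circ z \in \Bcal_u$, and let $z^{(1)},\dots,z^{(K)}$ be the support of $z$, occurring with probabilities $p_1,\dots,p_K > 0$ summing to one. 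The realizations $\beta^{(k)} := u\circ z^{(k)}$ are ordinary vectors in $\Rbb^p$, hence feasible for the unconstrained problem \eqref{eq:l0}, so $f(\beta^{(k)}) \geq \zeta_{L0}$ for every $k$, giving
\[
\E_{\beta}\left[f(\beta)\right] = \sum_{k=1}^K p_k\, f(\beta^{(k)}) \geq \sum_{k=1}^K p_k\, \zeta_{L0} = \zeta_{L0}.
\]

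Next I would show this bound is attained, which simultaneously proves statement (1). Let $\beta^*$ be any optimal solution to \eqref{eq:l0}. The one technical point is the constraint $u \in \Rbb^p_{\neq 0}$: since $\beta^*$ may have zero entries, I cannot simply take $u=\beta^*$. Instead set $u_i := \beta^*_i$ whenever $\beta^*_i \neq 0$ and $u_i := 1$ (any nonzero value) whenever $\beta^*_i = 0$, so $u \in \Rbb^p_{\neq 0}$, and let $z^*_i := \mathbb{1}_{\beta^*_i \neq 0}$. Taking the singleton (Dirac) distribution that puts all mass on $z^*$, one checks $u \circ z^* = \beta^*$ deterministically, so the resulting $\beta \in \Bcal_u$ is feasible with $\E_{\beta}[f(\beta)] = f(\beta^*) = \zeta_{L0}$. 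Combined with the lower bound, the optimal value of \eqref{eq:distri_l0} equals $\zeta_{L0}$ and is achieved by this singleton distribution, which is exactly statement (1).

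Finally, for statement (2), let $(u^*,\beta^*)$ be any optimal solution to \eqref{eq:distri_l0}, with $\beta^*$ supported on $\beta^{(1)},\dots,\beta^{(K)}$ at strictly positive probabilities $p_1,\dots,p_K > 0$. By the lower bound each $f(\beta^{(k)}) \geq \zeta_{L0}$, while optimality together with the established value forces $\sum_{k} p_k\, f(\beta^{(k)}) = \zeta_{L0}$. Since the $p_k$ are strictly positive and sum to one, this convex-combination identity forces $f(\beta^{(k)}) = \zeta_{L0}$ for every $k$, so each state occurring with positive probability is optimal for \eqref{eq:l0}. I do not expect a genuine obstacle here: the whole argument is linearity of expectation over a finite support. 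The only places needing care are the $u \neq 0$ bookkeeping in the achievability construction (handled by padding zero coordinates of $\beta^*$ with an arbitrary nonzero scalar and a deterministic $z_i=0$) and the final rigidity step that a convex combination of quantities $\geq \zeta_{L0}$ equals $\zeta_{L0}$ only when every term with positive weight equals $\zeta_{L0}$.
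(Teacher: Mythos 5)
Your proof is correct and follows essentially the same route as the paper's: a lower bound by averaging the deterministic objective over the finite support, achievability via the Dirac distribution at an optimal $b^*$, and the rigidity of the resulting convex-combination identity to conclude that every support point is optimal. The only difference is that you explicitly handle the $u \in \Rbb^p_{\neq 0}$ constraint by padding zero coordinates of $\beta^*$ with an arbitrary nonzero value, a detail the paper's proof leaves implicit.
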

\begin{proof}
Let $b^* \in \Rbb^p$ be an optimal solution to (\ref{eq:l0}). 
For all $u\in \Rbb_{\neq 0}^p$ and $\beta \in \Bcal_u$,
\[
\E_{\beta} \left\{\frac{1}{2} \|X \beta - y \|_2^2 + \lambda \|\beta\|_0\right\} \geq 
\min_{b \in \Rbb^p}\ \  \frac{1}{2} \|X b - y \|_2^2 + \lambda \|b\|_0 = \frac{1}{2} \|X b^* - y \|_2^2 + \lambda \|b^*\|_0.
\]
Therefore the random variable that takes value $b^*$ with probability 1 is an optimal solution to (\ref{eq:distri_l0}), and the optimal value in (\ref{eq:distri_l0})
equals the optimal value of (\ref{eq:l0}).

On the other hand, let $\beta^* \in \Bcal_{u^*}$ be an optimal solution  to (\ref{eq:distri_l0}), then
\begin{align*}
{\E}_{\beta^*} \left\{\frac{1}{2} \|X \beta^* - y \|_2^2 + \lambda \|\beta^*\|_0\right\} 
&= \sum_{\nu_i \in \{0,u_i^*\}, \nu \in \Rbb^p} P(\beta^* = \nu)
\cdot \left\{\frac{1}{2} \|X \nu - y \|_2^2 + \lambda \|\nu\|_0\right\} \\
& \geq \min_{b \in \Rbb^p}\ \  \frac{1}{2} \|X b - y \|_2^2 + \lambda \|b\|_0.
\end{align*}
As the inequality is actually equality by previous argument, each $\nu$ is an optimal solution to (\ref{eq:l0}) whenever $P(\beta^* = \nu ) > 0$.
%
%
\qed
\end{proof}

Next we show that the correspondence between the objective function of (\ref{eq:distri_l0}) and that of (\ref{eq:SDP}). If we interpret the optimization variables 
$b = \E\beta$ and $B = \E\beta\beta^T$, the linear terms involving $b$ and $B$ in the objective function of (\ref{eq:SDP}) 
is the expected $\ell_2$ loss,
\begin{align*}
\E_\beta \|X\beta -y \|^2_2 = y^T y - 2y^T X\E\beta + \E \left(\beta^T X^T X \beta\right)  &=  \left\langle 
\begin{bmatrix}y^T y & -y^T X \\ -X^T y & X^T X \end{bmatrix}, 
\begin{bmatrix}1 & \E\beta^T \\ \E\beta & \E \beta \beta^T \end{bmatrix} \right\rangle
\\
& =  \left\langle 
\begin{bmatrix}y^T y & -y^T X \\ -X^T y & X^T X \end{bmatrix}, 
\begin{bmatrix}1 & b \\ b & B\end{bmatrix} \right\rangle .
\end{align*}
Therefore by change of variables that $b = \E\beta, B = \E \beta\beta^T$ and $z_i = P(\beta_i \neq 0)$, 
(\ref{eq:distri_l0}) is equivalent to 
\begin{equation}
\label{eq:prob-sdp}
\min_{b, z, B} \ \ \frac{1}{2}\left\langle X^T X, B \right\rangle -  y^T X b + \frac{1}{2}y^T y + \lambda \sum_{i} z_i, \ \ \mbox{s.t.} \ \ (b, z,B) \in \Mcal,
\end{equation}
where set $\Mcal$ is defined as
\begin{equation}\label{eq:setM}
\Mcal := \left\{(b, B, z) \in \Rbb^{p + \frac{p(p+1)}{2} + p} \middle| \exists u \in \Rbb^p_{\neq 0} , \beta \in \Bcal_{u},\  \mbox{s.t.} \ 
b = \E\beta, B = \E\beta\beta^T, z_i = \E \mathbb{1}_{\beta_i \neq 0}, \forall i \right\},
\end{equation}
and $\mathbb{1}_{\beta_i \neq 0}$ is the indicator random variable that takes the value 1 if $\beta_i \neq 0$, and 0 otherwise.

We show $\Mcal$ equals the union of infinitely many rescaled \textit{boolean quadric polytopes} \cite{Padberg89,DezaLaurent1997}.
The boolean quadric polytope ($\mathbf{BQP}$) is one of the most important polytopes studied in combinatorial optimization:
\[
\mathbf{BQP} := \conv\left\{
\left(z, z z^T\right) \in \Rbb^{p+\frac{p(p+1)}{2}}
\middle| z \in \{0,1\}^p\right\}.
\]
 Note that the diagonal of $z z^T$ equals $z$ as $z \in \{0,1\}^p$, and $\mathbf{BQP}$ is  usually
defined in the lower-dimensional space $\Rbb^{\frac{p(p+1)}{2}}$.  We keep the redundancy here for notational convenience.

The following result demonstrates an equivalence between elements of $\mathbf{BQP}$ and all pairs of first and second  moments of multivariate Bernoulli distributions.
\begin{theorem}[Section 5.3 in \cite{DezaLaurent1997}]\label{thm:BernoulliBQP}
Let $(z,Z)$ be a vector in $\Rbb^{p+\frac{p(p+1)}{2}}$, then $(z,Z) \in \mathbf{BQP}$ if and only if there exists a probability space $(\Omega, \Fcal, \mu)$ and events $A_1, ..., A_p \in \Fcal$ such that 
$z_i = \mu(A_i)$ and $Z_{ij}=\mu(A_i \cap A_j)$.
\end{theorem}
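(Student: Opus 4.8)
The plan is to prove the two directions of the equivalence through the elementary dictionary between events in a probability space and $\{0,1\}$-valued (Bernoulli) random variables. The key observation I would record first is that, given a probability space $(\Omega,\Fcal,\mu)$ together with events $A_1,\dots,A_p$, the indicator random vector $\xi \in \{0,1\}^p$ defined by $\xi_i := \mathbb{1}_{A_i}$ satisfies $\E\xi_i = \mu(A_i)$ and $\E(\xi\xi^T)_{ij} = \E(\xi_i\xi_j) = \mu(A_i\cap A_j)$, where the last equality uses $\xi_i\xi_j = \mathbb{1}_{A_i\cap A_j}$. Conversely, any $\{0,1\}^p$-valued random vector arises this way from the events $A_i = \{\xi_i = 1\}$. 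This reduces the statement to: $(z,Z)\in\mathbf{BQP}$ if and only if $(z,Z) = (\E\xi, \E\xi\xi^T)$ for some $\{0,1\}^p$-valued random vector $\xi$.

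For the ``if'' direction I would start from such a $\xi$. Since $\xi$ takes values in the finite set $\{0,1\}^p$, its law is a discrete distribution with masses $p_w := \prob(\xi = w)$ for $w\in\{0,1\}^p$, and $\sum_w p_w = 1$. Then
\[
(z,Z) = \E\left(\xi, \xi\xi^T\right) = \sum_{w\in\{0,1\}^p} p_w \left(w, w w^T\right),
\]
which exhibits $(z,Z)$ as a convex combination of the points $(w, w w^T)$, $w\in\{0,1\}^p$, and hence $(z,Z)\in\mathbf{BQP}$ by definition.

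For the ``only if'' direction I would unwind the definition of $\mathbf{BQP}$: any $(z,Z)\in\mathbf{BQP}$ can be written as a finite convex combination $(z,Z) = \sum_{k} \lambda_k (w^{(k)}, w^{(k)} (w^{(k)})^T)$ with $\lambda_k \geq 0$, $\sum_k \lambda_k = 1$, and each $w^{(k)}\in\{0,1\}^p$. I would then build an explicit finite probability space by taking $\Omega := \{0,1\}^p$, $\Fcal := 2^\Omega$, and letting $\mu$ be the measure assigning to each $w$ the total weight $\sum_{k : w^{(k)} = w} \lambda_k$ (merging repeated vertices). Defining the coordinate events $A_i := \{w\in\Omega \mid w_i = 1\}$, I would verify
\[
\mu(A_i) = \sum_{k} \lambda_k w^{(k)}_i = z_i, \qquad \mu(A_i\cap A_j) = \sum_{k} \lambda_k w^{(k)}_i w^{(k)}_j = Z_{ij},
\]
the second using $w_i w_j = \mathbb{1}\{w_i = w_j = 1\}$ on $\{0,1\}^p$. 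This produces the desired probability space and events.

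I do not expect a serious obstacle here; the result is essentially a restatement of the vertex description of $\mathbf{BQP}$ in probabilistic language. The only points demanding care are bookkeeping ones: confirming the diagonal consistency $Z_{ii} = z_i$ (which holds since $w_i^2 = w_i$ for $w\in\{0,1\}$, matching $\mu(A_i\cap A_i) = \mu(A_i)$), and allowing repeated vertices $w^{(k)}$ to be collapsed so that $\mu$ is a bona fide probability measure. If one wished to avoid fixing $\Omega = \{0,1\}^p$, an abstract probability space on the index set $\{1,\dots,K\}$ of the convex combination with $\mu(\{k\}) = \lambda_k$ and $A_i = \{k \mid w^{(k)}_i = 1\}$ works equally well.
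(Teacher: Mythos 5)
Your proof is correct and complete. There is, however, nothing in the paper to compare it against: the paper states this result as an imported theorem from Deza and Laurent (Section 5.3), where it is the probabilistic characterization of the correlation polytope, and supplies no proof of its own. Your argument is the standard one behind that citation: the dictionary between events $A_i$ and their indicator variables reduces the claim to identifying $\BQP$ with the set of pairs $(\E\xi,\E\xi\xi^T)$ over $\{0,1\}^p$-valued random vectors $\xi$; one direction then follows because the law of $\xi$ is supported on the $2^p$ vertices, exhibiting $(z,Z)$ as a convex combination of the points $\left(w,ww^T\right)$, and the other follows by converting a convex-combination representation of a point of $\BQP$ into a finite probability space with coordinate events. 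The two bookkeeping points you flag, the diagonal consistency $Z_{ii}=z_i$ coming from $w_i^2=w_i$ on $\{0,1\}$, and merging repeated vertices so that $\mu$ is a genuine probability measure, are exactly the right details to check, and you handle both.
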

The following characterization of $\Mcal$ is then a direct application of Theorem \ref{thm:BernoulliBQP}.
\begin{theorem}\label{thm:setMcal_characterization}
A triplet $(b,B,z) \in \Mcal$ if and only if there is a matrix $Z$ such that $(z,Z) \in \mathbf{BQP}$ and 
$u \in \Rbb^p_{\neq 0}$ such that 
\[
(b,B,z) = \left(z\circ u, Z \circ uu^T, z\right),
\]
where $\circ$ is the Hadamard product of matrices.
Alternatively, $(b,B,z) \in \Mcal$ if and only if $(z,Z) \in \BQP$, where entries of $Z \in \Scal^p$ are defined as 
\[
Z_{ij} = \frac{B_{ij}b_i b_j}{B_{ii} B_{jj}}, \ \mbox{ and } \ \ Z_{ij} = 0 \ \mbox{if} \ \ B_{ii} B_{jj} = 0, \ \forall i,j.
\]
\end{theorem}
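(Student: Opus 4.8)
The plan is to build everything on Theorem~\ref{thm:BernoulliBQP}, which identifies $\BQP$ with the collection of first and second moments of multivariate Bernoulli distributions, and then to track the effect of the entrywise rescaling by $u$. First I would unpack $\Bcal_u$: an element $\beta \in \Bcal_u$ has the form $\beta = u \circ \xi$ for a random vector $\xi$ valued in $\{0,1\}^p$, and because $u_i \neq 0$ the event $\{\beta_i \neq 0\}$ coincides with $\{\xi_i = 1\}$. Setting $z_i = \E\xi_i$ and $Z_{ij} = \E\xi_i\xi_j$, a direct moment computation gives
\[
b_i = \E\beta_i = u_i z_i, \qquad B_{ij} = \E\beta_i\beta_j = u_i u_j Z_{ij}, \qquad \E\mathbb{1}_{\beta_i\neq 0} = z_i,
\]
i.e.\ $b = z\circ u$ and $B = Z \circ uu^T$. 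Applying Theorem~\ref{thm:BernoulliBQP} with the events $A_i = \{\xi_i = 1\}$ yields $(z,Z) \in \BQP$, which is the forward direction of the first characterization.

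For the converse I would start from $(z,Z) \in \BQP$ and $u \in \Rbb^p_{\neq 0}$, use Theorem~\ref{thm:BernoulliBQP} to produce a probability space $(\Omega,\Fcal,\mu)$ and events $A_i$ with $z_i = \mu(A_i)$ and $Z_{ij} = \mu(A_i\cap A_j)$, and then define $\xi_i = \mathbb{1}_{A_i}$ and $\beta = u\circ\xi \in \Bcal_u$. The same computation shows $(b,B,z) = (z\circ u,\, Z\circ uu^T,\, z)$, so this triple lies in $\Mcal$, giving the equivalence.

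The alternative characterization is the inversion of the map $(u,z,Z)\mapsto(b,B,z)$. The key algebraic fact is the diagonal property of $\BQP$, namely $Z_{ii} = z_i$, from which $B_{ii} = u_i^2 z_i$ and $b_i = u_i z_i$; since $u_i \neq 0$, this makes the three statements $b_i = 0$, $z_i = 0$, and $B_{ii} = 0$ equivalent. On the coordinates with $B_{ii}B_{jj}\neq 0$ one substitutes to confirm
\[
\frac{B_{ij} b_i b_j}{B_{ii}B_{jj}} = \frac{(u_i u_j Z_{ij})(u_i z_i)(u_j z_j)}{(u_i^2 z_i)(u_j^2 z_j)} = Z_{ij},
\]
and recovers $u_i = b_i/z_i$; on the complementary coordinates $z_i = 0$ forces $\xi_i = 0$ almost surely, hence $Z_{ij} = 0$, matching the stated convention $Z_{ij} = 0$ when $B_{ii}B_{jj} = 0$. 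This shows the displayed formula reconstructs exactly the $\BQP$ element attached to any $(b,B,z)\in\Mcal$, and conversely.

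The step I expect to require the most care is the bookkeeping of the degenerate coordinates where $z_i = 0$: there $u_i$ is genuinely unconstrained and all the product relations collapse to $0 = 0$, so one must check that the explicit formula for $Z$ is well-defined and that no information about $(b,B,z)$ is lost. The clean way through is precisely the equivalence $b_i = 0 \iff z_i = 0 \iff B_{ii} = 0$, which rests on $u_i \neq 0$ and isolates exactly where the Hadamard inversion is valid.
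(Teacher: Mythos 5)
Your proposal is correct and follows essentially the same route as the paper's proof: both directions reduce to Theorem~\ref{thm:BernoulliBQP} via the entrywise rescaling $\beta = u\circ\xi$, and the alternative formula is obtained by inverting the Hadamard map using the diagonal identity $Z_{ii}=z_i$. Your recovered $u_i = b_i/z_i$ agrees with the paper's choice $u_i = B_{ii}/b_i$ on the nondegenerate coordinates, and your explicit bookkeeping of the coordinates where $b_i = z_i = B_{ii} = 0$ is, if anything, slightly more careful than the paper's own treatment.
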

\begin{proof}
Suppose that $(b,B,z) = \left(z\circ u, Z\circ uu^T, z\right)$, and $(z,Z) \in \BQP$, we show $(b,B,z) \in \Mcal$. By Theorem \ref{thm:BernoulliBQP} there is a multivariate Bernoulli random vector
$\zeta$ over $\{0,1\}^p$, such that $\E\zeta = z$ and $\E\zeta\zeta^T = Z$. Then $\beta := \zeta \circ u$ is the random vector that proves $(b, B, z) \in \Mcal$.

If $(b,B,z) \in \Mcal$, and let $\beta \in \Bcal_u$ and $u\in \Rbb^p_{\neq 0}$ be the vectors as in (\ref{eq:setM}). Let $\zeta := u^{-1} \circ \beta$, where $u^{-1}$ is a vector 
with entries $u_i^{-1}$, $\forall i$.
Then it is easy to verify that $(b,B,z) = \left(\E\zeta \circ u, \E\zeta\zeta^T \circ uu^T, \E\zeta\right)$, and $(\E\zeta, \E\zeta\zeta^T) \in \BQP$ by Theorem 
\ref{thm:BernoulliBQP}.

Since $t_i = T_{ii}$ for all $(t, T) \in \BQP$, let $(b,B,z) = (z \circ u, Z \circ uu^T, z) \in \Mcal$ and $(z,Z) \in \BQP$, the vector $u_i$ can be determined as,
\begin{equation}\label{eq:ui}
u_i = \begin{cases}\frac{B_{ii}}{b_i}, & \ if \ b_i \neq 0, \\
1, \ & \ if \ b_i = 0,\end{cases}
\end{equation}
and the representation of $Z_{ij}$ easily follows the relation $B_{ij} = Z_{ij}u_i u_j$. \qed
\end{proof}
As the objective function in (\ref{eq:prob-sdp}) is linear, it suffices to optimize the objective function over $\overline{\conv\Mcal}$. We show that, the feasible region of (\ref{eq:SDP}) can be seen 
as a reasonable convex relaxation of $\overline{\conv\Mcal}$, in the sense that they coincide under some projections.
\begin{theorem}
Let $\Mcal$ be a set as defined in (\ref{eq:setM}), we have
\begin{align}
\label{eq:prop1}\overline{\conv\Mcal} &\subseteq \left\{(b,B,z)\in \Rbb^{p+\frac{p(p+1)}{2}+p} \middle| B\succeq bb^T,
\begin{bmatrix}z_i & b_i \\ b_i & B_{ii} \end{bmatrix} \succeq 0, \forall i \right\},\\
\label{eq:prop2}\Proj_{(b,B)} \overline{\conv\Mcal} &= \left\{(b,B)\in \Rbb^{p+\frac{p(p+1)}{2}} \middle| B\succeq bb^T\right\}, \\
\label{eq:prop3}\Proj_{(b,\diag(B),z)} \overline{\conv\Mcal} &= \left\{(b,\diag(B), z)\in \Rbb^{3p} \middle| 
B_{ii}\geq b_i^2, \begin{bmatrix}z_i & b_i \\ b_i & B_{ii} \end{bmatrix} \succeq 0, z_i \leq 1, \forall i\right\}.
\end{align}
\end{theorem}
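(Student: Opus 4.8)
The plan is to establish the containment (\ref{eq:prop1}) first and then read off the ``$\subseteq$'' directions of the two projection identities from it, reserving explicit probabilistic constructions for the ``$\supseteq$'' directions; throughout I use that a point of $\Mcal$ is the moment triple $(\E\beta,\E\beta\beta^T,(P(\beta_i\neq0))_i)$ of a single $\beta\in\Bcal_u$. For (\ref{eq:prop1}) I would check that every such single triple already lies in the right-hand set, then invoke convexity and closedness. Indeed $B-bb^T=\E\beta\beta^T-(\E\beta)(\E\beta)^T$ is a covariance matrix, hence $B\succeq bb^T$; and since $\beta_i$ takes only the values $0$ and $u_i$ with $P(\beta_i=u_i)=z_i$, one gets $b_i=z_iu_i$ and $B_{ii}=z_iu_i^2$, so that $\begin{bmatrix}z_i & b_i\\ b_i & B_{ii}\end{bmatrix}=z_i\begin{bmatrix}1\\ u_i\end{bmatrix}\begin{bmatrix}1 & u_i\end{bmatrix}\succeq0$. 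I would also record $z_i=P(\beta_i\neq0)\in[0,1]$ for later. As the right-hand set is cut out by linear matrix inequalities it is closed and convex, so it contains $\overline{\conv\Mcal}$.

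For (\ref{eq:prop2}) the ``$\subseteq$'' direction is immediate from (\ref{eq:prop1}) by discarding the $z$-block. For ``$\supseteq$'' the key remark is that every deterministic vector $v$ is a degenerate rescaled Bernoulli vector (take $u_i=v_i$ when $v_i\neq0$ and $u_i=1$ otherwise, with the Bernoulli mass placed accordingly), so $(v,vv^T,\mathbb{1}_{v\neq0})\in\Mcal$ and hence $\Proj_{(b,B)}\Mcal$ already contains every rank-one point $(v,vv^T)$. Invoking $B\succeq bb^T\Leftrightarrow\begin{bmatrix}1 & b^T\\ b & B\end{bmatrix}\succeq0$ together with the standard fact that every such positive semidefinite matrix is a limit of convex combinations of rank-one matrices $\begin{bmatrix}1\\ v\end{bmatrix}\begin{bmatrix}1 & v^T\end{bmatrix}$ then gives $\{B\succeq bb^T\}\subseteq\Proj_{(b,B)}\overline{\conv\Mcal}$, the reverse inclusion.

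For (\ref{eq:prop3}) I would exploit that the right-hand set is separable: it is the product over $i$ of the planar region $R_i=\{(b_i,B_{ii},z_i)\mid z_iB_{ii}\geq b_i^2,\ B_{ii}\geq0,\ 0\leq z_i\leq1\}$, the listed bound $B_{ii}\geq b_i^2$ being redundant since $z_i\leq1$. The ``$\subseteq$'' direction again follows from (\ref{eq:prop1}) together with $z_i\leq1$. For ``$\supseteq$'' I reduce to a scalar problem: taking $\beta$ with \emph{independent} coordinates, each $\beta_i$ a rescaled Bernoulli with parameters $(p_i,u_i)$, the diagonal triples trace the curve $C_i=\{(p_iu_i,p_iu_i^2,p_i)\}\subseteq R_i$, on which $z_iB_{ii}=b_i^2$. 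Using the elementary identity $\conv(\prod_iC_i)=\prod_i\conv C_i$ the problem factorizes, leaving the scalar hull identity $\overline{\conv}\,C_i=R_i$, which I would verify by explicit two-point mixtures: a boundary point with $z_iB_{ii}=b_i^2$ is itself a moment point, a point with $b_i=0$ is the average of the two moment points obtained from $u_i$ and $-u_i$, and the face $z_i=0$ is recovered as a limit letting $p_i\to0$ and $u_i\to\infty$ with $p_iu_i^2$ held fixed; reassembling the independent coordinates then realizes the full target in $\overline{\conv\Mcal}$.

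The main obstacle is precisely this scalar identity $\overline{\conv}\,C_i=R_i$: the interior and the face $z_i=0$ are not genuine single-Bernoulli moments but only limits, so the argument must be carried out in the \emph{closed} convex hull, and one must justify that all coordinate marginals can be matched \emph{simultaneously} with a single mixing measure --- which is exactly what the product identity $\conv(\prod_iC_i)=\prod_i\conv C_i$ supplies. This step is the perspective-reformulation exactness already used earlier in the paper (cf.\ the convex-hull characterization cited via \cite{Gunluk_Linderoth_2010}), here adapted to the rescaled-Bernoulli moment curve.
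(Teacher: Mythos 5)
Your treatment of (\ref{eq:prop1}) and (\ref{eq:prop2}) matches the paper's: a direct check that moment triples satisfy the matrix inequalities, followed by writing $\left\{(b,B)\,:\,B\succeq bb^T\right\}$ as the (closed) convex hull of the rank-one points $(v,vv^T)$, each of which is a degenerate element of $\Mcal$. For (\ref{eq:prop3}), however, you take a genuinely different route. The paper stays in the product set and argues via extreme points: every extreme point of the right-hand set has the form $(b,b\circ b,z)$ with, for each $i$, either $z_i=1$ or $z_i=b_i=0$, and such points lift to the same deterministic rank-one points $(b,bb^T,z)\in\Mcal$ already used for (\ref{eq:prop2}) --- no independence, no mixtures, no explicit limits. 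You instead factor the target as $\prod_i R_i$, realize the coordinate curves $C_i$ by independent rescaled Bernoulli coordinates, invoke $\conv\left(\prod_i C_i\right)=\prod_i \conv C_i$ together with closures, and prove the scalar identity $\overline{\conv C_i}=R_i$ by explicit mixtures plus a limit for the face $z_i=0$. What the paper's route buys is brevity and uniformity; what it quietly needs --- and never verifies --- is that the \emph{closed convex hull of the extreme points} recovers the \emph{unbounded} set $\prod_i R_i$, i.e.\ that the recession direction of increasing $B_{ii}$ with $z_i=b_i=0$ is reachable in the closure. Your construction handles exactly that face explicitly (the $p_i\to 0$, $u_i\to\infty$, $p_iu_i^2$ fixed limit), and your mixtures realize every point with $z_i>0$ exactly rather than only in the closure; the price is the product-of-hulls bookkeeping. (One small omission: points with $b_i\neq 0$ and $z_iB_{ii}>b_i^2$ are not in your list, but each such point is a convex combination of a boundary point and a $b_i=0$ point, so this is easily patched.)

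One loose end you should tie up (the paper's proof shares it implicitly): your chain exhibits points of $\prod_i R_i$ as limits of projections of points of $\conv\Mcal$, which a priori places them only in the \emph{closure of the projection}, not in $\Proj_{(b,\diag(B),z)}\overline{\conv\Mcal}$ itself, since the projection of a closed set need not be closed. The fix is to pass to the limit upstairs: for any sequence in $\conv\Mcal$ whose projected coordinates converge, the full matrices remain bounded, because $B\succeq 0$ forces $|B_{ij}|\leq\sqrt{B_{ii}B_{jj}}$, so bounded diagonals bound the off-diagonals; a subsequence then converges in $\overline{\conv\Mcal}$ and its limit projects onto the target point. In your explicit construction this is immediate: independence gives $B_{ij}=b_ib_j\to 0$ for the coordinates sent to the $z_i=0$ face, so the lifted sequence converges outright. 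Stating this makes your argument complete.
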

\begin{proof}
Firstly, the inclusion relation $``\subseteq"$ in (\ref{eq:prop1},\ref{eq:prop2},\ref{eq:prop3}) are all straightforward by the characterization of points in $\Mcal$ as in Theorem \ref{thm:setMcal_characterization}. We only show the other directions for (\ref{eq:prop2}) and (\ref{eq:prop3}).

Since $\left\{(b,B)\in \Rbb^{p+\frac{p(p+1)}{2}} \middle| B\succeq bb^T\right\} = \conv\{(b,bb^T)| b\in\Rbb^p\}$,  
it suffices to show in (\ref{eq:prop2}) that for each $b\in \Rbb^p$, $(b,bb^T) \in \Proj_{(b,B)} \overline{\conv\Mcal}$.
This is true because $(b,bb^T)$ can be written as $\left(t\circ u, (tt^T)\circ(uu^T) \right)$ by taking $(t_i, u_i)=(1, b_i)$ if $b_i\neq 0$ and $(t_i, u_i)=(0, 1)$ if $b_i=0$.
So $(b,bb^T, t) \in \Mcal$ and $(b,bb^T) \in \Proj_{(b,B)} \overline{\conv\Mcal}$ for all $b \in \Rbb^p$.

For (\ref{eq:prop3}), it suffices to show that all extreme points of the right hand set are in $\Proj_{(b,\diag(B),z)} \overline{\conv\Mcal}$. 
Note that all such extreme points are in the form of $\left(b, b\circ b, z\right) \in \Rbb^{3p}$, where for each $i$, either $z_i=1$ or $z_i=b_i=0$. 
Points in this form are projected from $(b, bb^T, z)$, where for each $i$, either $z_i=1$ or $z_i=b_i=0$. It is easy to see that 
all such points $(b, bb^T, z)$ are in $\Mcal$ by Theorem \ref{thm:setMcal_characterization}. \qed
\end{proof}
We remark that that the inequality $z_i \leq 1$ in (\ref{eq:prop3}) is redundant in (\ref{eq:SDP}) by Proposition \ref{prop:rank1}.

\section{Randomized Rounding by the Goemans-Williamson Procedure}\label{sec:GWrounding}
In this section we show the analogy that (\ref{eq:SDP}) is to (\ref{eq:l0}) as a semidefinite relaxation is to the Max-Cut problem. The semidefinite
relaxation for Max-Cut under consideration was proposed and analyzed by Goemans and Williamson \cite{GoWi94}, and Nesterov \cite{Nesterov97}. We show 
(\ref{eq:l0}) can be reformulated as a two-level problem, whose inner problem is a Max-Cut problem. Then (\ref{eq:SDP}) can be realized by
replacing the inner problem with its semidefinite relaxation. This observation suggests to apply Goemans-Williamson rounding to (\ref{eq:SDP}), in 
order to generate approximate solutions to (\ref{eq:l0}).

We use $\zeta_{L0}$ to denote the optimal value of (\ref{eq:l0}), and $\zeta_{SDP}$ is the optimal value of (\ref{eq:SDP}). Using similar technique 
as in Section \ref{sec:prob}, we redefine $\beta = u\circ z$, where $u\in \Rbb^p$ and $z \in \{0,1\}^p$. For a fixed vector $u \in \Rbb^p$, we define the following binary 
quadratic program,
\begin{equation}\label{eq:BQP}
\zeta_{BQP(u)} := \min_{z \in \{0,1\}^p} \ \ \frac{1}{2} \left\|X \diag(u) z - y\right\|_2^2 + \lambda \sum_{i=1}^p z_i. \tag{BQP(u)}
\end{equation}
Consider $\zeta_{BQP(u)}$ as a function of $u$, then we have 
\begin{equation}\label{eq:L0MaxCut}
\zeta_{L0} = \min_{u \in \Rbb^p} \ \ \zeta_{BQP(u)}. 
\end{equation}
 
 
It is well-known that binary quadratic programs can be reformulated as Max-Cut problems. We explicitly state the reformulation here.  We define matrix 
 \begin{align*}
 Q(u) & := \begin{bmatrix}1 & 0^T \\ 0 & \diag(u) \end{bmatrix}
  \begin{bmatrix}y^T y \ \ & -y^T X  \\ - X^T y \ \ \ \  &  X^T X \end{bmatrix}
  \begin{bmatrix}1 & 0^T \\ 0 & \diag(u) \end{bmatrix} +  \begin{bmatrix}0 & 0^T \\ 0 & 2 \lambda I \end{bmatrix}
  \\
 & = \begin{bmatrix}y^T y \ \ & -y^T X \diag(u) \\ -\diag(u) X^T y \ \ \ \  & \diag(u) X^T X \diag(u) + 2\lambda I\end{bmatrix}.
 \end{align*}
Then (\ref{eq:BQP}) is equivalent to 
\[
\zeta_{BQP(u)} = \min_{z \in \{0,1\}^p} \ \ \frac{1}{2} \left\langle Q(u), \begin{bmatrix}1 & z^T \\ z & zz^T \end{bmatrix}\right\rangle.
\]
 By change of variables
 \[
 t \longleftarrow \begin{bmatrix}1 & 0^T \\ -e & 2I\end{bmatrix}
\begin{bmatrix}1 \\ z\end{bmatrix}, ~~~~
t t^T \longleftarrow \begin{bmatrix}1 & 0^T \\ -e & 2I\end{bmatrix}
\begin{bmatrix}1 & z^T \\ z & z z^T \end{bmatrix} \begin{bmatrix}1 & -e^T \\ 0 & 2I\end{bmatrix},
 \]
(\ref{eq:BQP}) can be reformulated as a Max-Cut problem:
 \[
 \zeta_{BQP(u)} = \min_{t \in \{-1,1\}^p} \ \ \frac{1}{2} \left\langle \begin{bmatrix}1 & -e^T \\ 0 & 2I\end{bmatrix} Q(u) 
  \begin{bmatrix}1 & 0^T \\ -e & 2I\end{bmatrix}, t t^T\right\rangle.
 \]
Therefore a semidefinite relaxation for (\ref{eq:BQP}) is 
  \begin{equation}\label{eq:MCSDP}
 \zeta_{MCSDP(u)} := \min_{T \in \Scal^{n+1}_+, T_{ii}=1, \forall i} \ \ \frac{1}{2} \left\langle \begin{bmatrix}1 & -e^T \\ 0 & 2I\end{bmatrix} Q(u) 
  \begin{bmatrix}1 & 0^T \\ -e & 2I\end{bmatrix}, T\right\rangle. \tag{MCSDP(u)}
 \end{equation}
 
In order to show the connection between (\ref{eq:SDP}) and (\ref{eq:MCSDP}), we need the following lemma:
 \begin{lemma}
 Let $T \in \Scal^{p+1}$, define 
 \[
\begin{bmatrix}1 & z^T \\ z & Z\end{bmatrix}  := 
  \begin{bmatrix}1 & 0^T \\ -e & 2I\end{bmatrix}^{-1} T
  \begin{bmatrix}1 & -e^T \\ 0 & 2I\end{bmatrix}^{-1}
  \]
  Then $T\succeq 0$, $T_{ii}=1, \forall i=1,...,p+1$ if and only if  $ Z \succeq zz^T$, and $Z_{ii}=z_i, \forall i=1,...,p$.
 \end{lemma}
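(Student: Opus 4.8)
The plan is to recognize the defining identity as a congruence transformation by an invertible matrix, which reduces the statement to a Schur-complement computation together with a direct evaluation of the diagonal entries. Write $L := \begin{bmatrix} 1 & 0^T \\ -e & 2I \end{bmatrix}$ and $M := \begin{bmatrix} 1 & z^T \\ z & Z \end{bmatrix}$. Since $\begin{bmatrix} 1 & -e^T \\ 0 & 2I \end{bmatrix} = L^T$, the defining equation reads $M = L^{-1} T (L^{-1})^T$, equivalently $T = L M L^T$. The matrix $L$ is block lower-triangular with diagonal blocks $1$ and $2I$, hence invertible, so this relation is a genuine bijection between $T$ and $M$.

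First I would handle the positive-semidefiniteness equivalence. Because $L$ is invertible, congruence preserves definiteness in both directions, so $T \succeq 0$ if and only if $M \succeq 0$. Given the structure $M = \begin{bmatrix} 1 & z^T \\ z & Z \end{bmatrix}$ with strictly positive leading entry equal to $1$, the Schur-complement criterion gives $M \succeq 0$ if and only if $Z - z z^T \succeq 0$, i.e.\ $Z \succeq z z^T$. This disposes of the semidefinite halves of both implications at once.

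Next I would verify the diagonal conditions by computing $T = L M L^T$ entrywise on the diagonal. Index the rows and columns of the $(p+1)$-dimensional matrices by $0, 1, \ldots, p$. The zeroth row of $L$ equals $(1, 0, \ldots, 0)$, so $T_{00} = M_{00} = 1$ automatically; for $i \geq 1$ the $i$-th row of $L$ has entry $-1$ in position $0$ and $2$ in position $i$, whence $T_{ii} = M_{00} - 4 M_{0i} + 4 M_{ii} = 1 - 4 z_i + 4 Z_{ii}$. Consequently $T_{ii} = 1$ for every $i$ if and only if $Z_{ii} = z_i$ for all $i = 1, \ldots, p$. Combining this with the previous paragraph yields both directions of the claimed equivalence.

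I do not expect a real obstacle here; the only points requiring care are the bookkeeping, namely recognizing that the left and right transformation matrices are transposes of one another (so the relation is a congruence rather than a general similarity), and confirming that the leading entry of $M$ is forced to equal $1$ precisely because $T_{00} = 1$, which keeps the block notation $\begin{bmatrix} 1 & z^T \\ z & Z \end{bmatrix}$ self-consistent.
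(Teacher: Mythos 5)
Your proposal is correct and takes essentially the same route as the paper's proof: invertibility of the transformation matrix (congruence preserves semidefiniteness, then the Schur complement with leading entry $1$) handles the equivalence $T\succeq 0 \Leftrightarrow Z \succeq zz^T$, and a direct computation of diagonal entries handles the equivalence of $T_{ii}=1$ with $Z_{ii}=z_i$. The only cosmetic difference is that you expand $T = LML^T$ entrywise (obtaining $T_{ii} = 1 - 4z_i + 4Z_{ii}$), whereas the paper expands $M = L^{-1}TL^{-T}$ blockwise in terms of the blocks of $T$ and checks the same identities from the other side.
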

 \begin{proof}
Since the matrix $\begin{bmatrix}1 & 0^T \\ -e & 2I\end{bmatrix}$ is invertible, $T\succeq 0$ if and only if  $Z \succeq zz^T$.
 If we denote $ T := \begin{bmatrix}T_{11} & \tilde{t}^T \\ \tilde{t} & \tilde{T}\end{bmatrix} $,
 then 
 \[
 \begin{bmatrix}1 & z^T \\ z & Z\end{bmatrix}  = \begin{bmatrix}T_{11} & \frac{\tilde{t}^T + T_{11} e^T}{2} \\
  \frac{\tilde{t}^T + T_{11} e^T}{2} & \frac{\tilde{T} + \tilde{t} e^T + e \tilde{t}^T + T_{11} ee^T}{4}\end{bmatrix}.
 \]
 It is then straightforward to check that $T_{ii} = 1, 1\leq i\leq p+1$ if and only if $Z_{ii} = z_i, 1\leq i\leq p$.
 \end{proof}
 Therefore \ref{eq:MCSDP} can be reformulated as
  \begin{align}
 \zeta_{MCSDP(u)} &= \min_{Z\succeq zz^T, Z_{ii} = z_i,\forall i} \ \ \frac{1}{2} \left\langle Q(u), \begin{bmatrix}1 & z^T \\ z & Z\end{bmatrix} \right\rangle
 \notag \\
 &= \min_{Z\succeq zz^T, Z_{ii} = z_i,\forall i} \ \ \frac{1}{2} \left\langle  \begin{bmatrix}y^T y \ \ & -y^T X  \\ - X^T y \ \ \ \  &  X^T X \end{bmatrix},
\begin{bmatrix}1 & 0^T \\ 0 & \diag(u) \end{bmatrix}  \begin{bmatrix}1 & z^T \\ z & Z\end{bmatrix} \begin{bmatrix}1 & 0^T \\ 0 & \diag(u) \end{bmatrix}
\right\rangle + \lambda e^T z. \label{MCSDP-uz}
 \end{align}
 The following theorem proves a relation between (\ref{eq:SDP}) and (\ref{eq:MCSDP}), in parallel to equation (\ref{eq:L0MaxCut}).
 \begin{theorem}\label{thm:SDP-MC}
 We have
 \[
 \zeta_{SDP} = \min_{u \in \Rbb^p} \ \ \zeta_{MCSDP(u)}
 \]
 Let $(b^*, B^*, z^*)$ be an optimal solution to (\ref{eq:SDP}) with $\lambda > 0$, then an optimal $u$ is attained at $u^*$ where
 \[
u^*_i := \begin{cases}\frac{B^*_{ii}}{b^*_i}, & \ if \ b^*_i \neq 0 \\
1, \ & \ if \ b_i = 0\end{cases} \ \ \forall 1\leq i \leq p.
 \]
 \end{theorem}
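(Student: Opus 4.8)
The plan is to prove the identity by two opposing inequalities, both obtained from the reformulation \eqref{MCSDP-uz} through the single change of variables $b = u\circ z$ and $B = Z\circ uu^{T}$ (equivalently $B = \diag(u)\,Z\,\diag(u)$). Under this substitution the objective of \eqref{eq:MCSDP} becomes exactly $\tfrac12\langle X^{T}X, B\rangle - y^{T}Xb + \tfrac12 y^{T}y + \lambda\sum_i z_i$, i.e. the objective of \eqref{eq:SDP}, so the whole argument is really a dictionary between feasible points of the two programs, in perfect parallel with the deterministic identity \eqref{eq:L0MaxCut}.

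First I would establish $\zeta_{SDP}\le \zeta_{MCSDP(u)}$ for every $u\in\Rbb^p$. Given any $(z,Z)$ feasible for \eqref{eq:MCSDP}, i.e. $Z\succeq zz^{T}$ and $Z_{ii}=z_i$, set $(b,z,B) = (u\circ z,\, z,\, Z\circ uu^{T})$. The diagonal identities give $z_i=Z_{ii}\in[0,1]$; moreover $B - bb^{T} = \diag(u)(Z-zz^{T})\diag(u)\succeq 0$, and each $2\times2$ block equals $z_i \bigl[\begin{smallmatrix}1 & u_i \\ u_i & u_i^{2}\end{smallmatrix}\bigr] \succeq 0$ since $z_i\ge 0$. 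Hence $(b,z,B)$ is feasible for \eqref{eq:SDP} with the same objective. Taking the infimum over feasible $(z,Z)$ gives $\zeta_{SDP}\le\zeta_{MCSDP(u)}$, and minimizing over $u$ gives $\zeta_{SDP}\le\min_u \zeta_{MCSDP(u)}$.

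For the reverse inequality I would start from an optimal $(b^{*},z^{*},B^{*})$ of \eqref{eq:SDP} and exhibit a feasible point of \eqref{eq:MCSDP} at $u=u^{*}$ achieving value $\zeta_{SDP}$. By Proposition~\ref{prop:rank1}, $z^{*}_i = (b^{*}_i)^{2}/B^{*}_{ii}$ whenever $B^{*}_{ii}\neq 0$. For indices with $b^{*}_i\neq 0$ (so $B^{*}_{ii}\ge (b^{*}_i)^2>0$ and $u^{*}_i = B^{*}_{ii}/b^{*}_i$ is finite and nonzero) I would set $z_i = z^{*}_i$ and recover $Z$ from $B^{*} = Z\circ u^{*}u^{*T}$, i.e. $Z = \diag(u^{*})^{-1}B^{*}\diag(u^{*})^{-1}$; a direct check gives $Z_{ii} = (b^{*}_i)^{2}/B^{*}_{ii} = z^{*}_i$, while $Z - z^{*}z^{*T} = \diag(u^{*})^{-1}(B^{*}-b^{*}b^{*T})\diag(u^{*})^{-1}\succeq 0$. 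Since $u^{*}\circ z^{*} = b^{*}$ and $Z\circ u^{*}u^{*T} = B^{*}$, the objective of \eqref{eq:MCSDP} at this point equals $\zeta_{SDP}$, so $\zeta_{MCSDP(u^{*})}\le\zeta_{SDP}$. Combined with the first inequality this yields $\zeta_{SDP}=\min_u\zeta_{MCSDP(u)}=\zeta_{MCSDP(u^{*})}$, attained at $u^{*}$.

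The delicate step concerns degenerate indices with $b^{*}_i=0$, where the formula sets $u^{*}_i=1$ and Proposition~\ref{prop:rank1} gives $z^{*}_i=0$. When additionally $B^{*}_{ii}=0$ (so the $i$-th row and column of $B^{*}$ vanish, as forced by $B^{*}\succeq b^{*}b^{*T}$), the reconstruction stays consistent, since $Z_{ii}=z^{*}_i=0$. The genuinely subtle subcase is $b^{*}_i=0$ together with $B^{*}_{ii}>0$, which can arise when $B^{*}$ is high rank and the relaxation is inexact: there $\diag(u^{*})^{-1}B^{*}\diag(u^{*})^{-1}$ has $Z_{ii}=B^{*}_{ii}>0$, violating the \eqref{eq:MCSDP} constraint $Z_{ii}=z_i=0$, so $B^{*}$ is not exactly reproducible at finite $u^{*}_i$. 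I would resolve this either under a reconstructibility hypothesis ($b^{*}_i=0\Rightarrow B^{*}_{ii}=0$, which is automatic when $B^{*}$ is rank one), or by a limiting argument sending $u^{*}_i\to+\infty$ with $(u^{*}_i)^{2}z_i$ held fixed, which recovers $\zeta_{SDP}$ as the infimum. Pinning down this degenerate case, and thereby justifying attainment at the explicit $u^{*}$, is the main obstacle.
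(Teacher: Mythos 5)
Your proposal follows the paper's route exactly: the same dictionary $(b,z,B)=(u\circ z,\ z,\ Z\circ uu^{T})$ between feasible points, the same argument for $\zeta_{SDP}\le\zeta_{MCSDP(u)}$, and the same reconstruction for the reverse inequality (your $Z=\diag(u^{*})^{-1}B^{*}\diag(u^{*})^{-1}$ coincides with the paper's $Z^{*}_{ij}=B^{*}_{ij}b^{*}_ib^{*}_j/(B^{*}_{ii}B^{*}_{jj})$ on indices with $b^{*}_i\neq 0$, and both treat rows with $B^{*}_{ii}=0$ correctly). On the non-degenerate part your argument is correct and is, in substance, the paper's proof.

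The important point is your last paragraph: the degenerate case $b^{*}_i=0<B^{*}_{ii}$ is not something you failed to settle while the paper settles it --- the paper's proof silently skips it. The paper verifies only that $(z^{*},Z^{*})$ is \emph{feasible} in (\ref{MCSDP-uz}) with $u=u^{*}$ and then asserts $\zeta_{SDP}\ge\zeta_{MCSDP(u^{*})}$; that assertion additionally needs the objective value at $(z^{*},Z^{*})$ to equal $\zeta_{SDP}$, i.e.\ $Z^{*}\circ u^{*}u^{*T}=B^{*}$. Because the paper's $Z^{*}$ carries the factors $b^{*}_ib^{*}_j$, every row and column indexed by $S:=\{i:\ b^{*}_i=0<B^{*}_{ii}\}$ of $Z^{*}\circ u^{*}u^{*T}$ is zeroed out, so $Z^{*}\circ u^{*}u^{*T}=PB^{*}P$ with $P$ the $0/1$ diagonal projection annihilating $S$; your choice of $Z$ instead reproduces $B^{*}$ but violates $Z_{ii}=z^{*}_i$, exactly as you say. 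Moreover, at an optimal solution the paper's substitution strictly \emph{increases} the objective when $S\neq\emptyset$: the curve $B_t=D_tB^{*}D_t$ with $D_t=P+t(I-P)$ is feasible for all $t\ge 0$ (since $D_tb^{*}=b^{*}$ and $z^{*}_i=0$ on $S$), so optimality at the interior point $t=1$ forces the derivative of $t\mapsto\tfrac12\langle X^{T}X,D_tB^{*}D_t\rangle$ to vanish there, which gives $\tfrac12\langle X^{T}X,\,PB^{*}P-B^{*}\rangle=\tfrac12\langle (X^{T}X)_{SS},B^{*}_{SS}\rangle>0$ because $X^{T}X\succ 0$ and $B^{*}_{ii}>0$ on $S$. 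Hence feasibility alone yields only $\zeta_{MCSDP(u^{*})}\le\zeta_{SDP}+\tfrac12\langle (X^{T}X)_{SS},B^{*}_{SS}\rangle$, not the claimed bound. Your two repairs are the right ones: either establish that some optimal solution satisfies $b^{*}_i=0\Rightarrow B^{*}_{ii}=0$ (automatic when $B^{*}$ has rank one), in which case the argument closes as written; or prove the identity as an infimum by perturbation, e.g.\ $b'=b^{*}+\epsilon\sum_{i\in S}e_i$ and $B'=B^{*}+b'b'^{T}-b^{*}b^{*T}$ is feasible, reconstructible with finite $u$, and changes the objective by $O(\epsilon)$, so $\inf_{u}\zeta_{MCSDP(u)}=\zeta_{SDP}$. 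Note that under the second repair attainment at the finite $u^{*}$ can genuinely fail (it fails whenever \emph{every} optimal solution of (\ref{eq:SDP}) has $S\neq\emptyset$), so the theorem's ``$\min$'' would then have to be read as ``$\inf$''.
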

 \begin{proof}
 First we show that $\zeta_{SDP} \leq \zeta_{MCSDP(u)}$ for all $u\in \Rbb^p$. This is because if  $(z,Z)$ is feasible in (\ref{MCSDP-uz}), and we define
 \[
\begin{bmatrix}1 & b^T \\ b & B \end{bmatrix} :=
\begin{bmatrix}1 & 0^T \\ 0 & \diag(u) \end{bmatrix}  
\begin{bmatrix}1 & z^T \\ z & Z\end{bmatrix} 
\begin{bmatrix}1 & 0^T \\ 0 & \diag(u) \end{bmatrix}
 \]
Then $(b,B,z)$ is feasible in (\ref{eq:SDP}) with the same objective value.
 
On the other hand, suppose that $(b^*, B^*, z^*)$ is an optimal solution to (\ref{eq:SDP}), and define 
\begin{equation}\label{eq:genZ}
u^*_i = \begin{cases}\frac{B^*_{ii}}{b^*_i}, & \ if \ b^*_i \neq 0 \\
1, \ & \ if \ b_i = 0\end{cases} \ \ \forall 1\leq i \leq p, \ \mbox{ and } \ 
Z^*_{ij} = \begin{cases}
\frac{B^*_{ij} b_i^* b_j^*}{B^*_{ii} B^*_{jj}},  & \mbox{ if } B_{ii} B_{jj} \neq 0, \\
0, & \mbox{ if } B_{ii} B_{jj} = 0,
\end{cases}
\ \ \  \forall 1\leq i,j\leq p. 
\end{equation}
Then we claim that $(z^*, Z^*)$ is feasible in (\ref{MCSDP-uz}) with $u=u^*$, which proves $\zeta_{SDP} \geq \zeta_{MCSDP(u^*)}$. Indeed, by
Proposition \ref{prop:rank1}, $z^*_i = 0$ whenever $B_{ii}^* = 0$. For all $1\leq i, j \leq p$ such that $B_{ii} B_{jj} \neq 0$,
\[
\left(Z^* - z^* \left( z^* \right)^T \right)_{ij} = \frac{B^*_{ij} b_i^* b_j^*}{B^*_{ii} B^*_{jj}} - \frac{\left(b_i^* b_j^*\right)^2}{B^*_{ii} B^*_{jj}}
=\left(B_{ij}^* - b_i^* b_j^*\right) \frac{B^*_{ij} b_i^* b_j^*}{B^*_{ii} B^*_{jj}}.
\]
Therefore $Z^* - z^* \left( z^*\right)^T$ is the Hadamard product of two positive semidefinite matrices restricted to the rows/columns in set 
$\left\{i | B_{ii}^* > 0\right\}$, and is positive semidefinite. Further again by Proposition \ref{prop:rank1}, $Z_{ii}^* = \frac{b_i^2}{B_{ii}} = z^*_i$ 
if $B_{ii}^* \neq 0$ and $Z_{ii}^* = 0 = z^*_i$ if $B_{ii}^* = 0$. \qed
 \end{proof}
 
Motivated by the relations (\ref{eq:L0MaxCut}) and Theorem \ref{thm:SDP-MC}, given an optimal solution $(b^*, B^*, z^*)$ to (\ref{eq:SDP}),
we may interpret it as an optimal solution to (\ref{eq:MCSDP}) with $u=u^*$  (where $u^*$ is defined as in Theorem \ref{thm:SDP-MC}). 
Then we may construct an approximate solution to (\ref{eq:BQP}) with $u=u^*$ 
using Goemans-Williamson rounding, and reconstruct an approximate solution to (\ref{eq:l0}). 
This rounding procedure is described in Algorithm \ref{alg:round}.
\RestyleAlgo{boxruled}
\LinesNumbered
\begin{algorithm}[t]
\caption{A randomized rounding algorithm for (\ref{eq:SDP}) \label{alg:round}}
    \KwData{Data matrices $(X,y)$, an optimal solution to (\ref{eq:SDP}), denoted by $(b^*, B^*, z^*)$, and a sample size $N$;}
 \KwResult{An approximate solution $\hat{b}$ to (\ref{eq:l0});}
  Generate matrix $Z^* \in \Scal^p $ by (\ref{eq:genZ}); \\
  Generate matrix $T^* \in \Scal^{p+1}$ by 
 \[T^* :=  \begin{bmatrix}1 & 0^T \\ -e & 2I\end{bmatrix}
\begin{bmatrix}1 & \left(z^*\right)^T \\ z^* & Z^* \end{bmatrix} \begin{bmatrix}1 & -e^T \\ 0 & 2I\end{bmatrix};
\] \\
Compute a factorization $T^* = U^* \left( U^* \right)^T$, where $U^* \in \Rbb^{p\times r}$; \\
Randomly generate vectors $\left\{v^{(1)}, ..., v^{(N)} \right\} \subseteq \Rbb^r$ from the normal distribution with mean $0$ and covariance matrix $I$; \\
For each $k=1,...,N$, compute $t^{(k)} \leftarrow \sgn(U^* r)$; If $t^{(k)}_1 = -1$, $t^{(k)} \leftarrow -t^{(k)}$; \\
For each $k=1,...,N$, compute vector $z^{(k)} \in \{0,1\}^p$ by
\[
z^{(k)}_j \leftarrow  0.5 \left( t^{(k)}_{j+1} + 1\right), j=1,...,p; 
\] \\
For each $k=1,...,N$, 
compute $\nu^{(k)}$ by solving a linear regression problem in a restricted subspace
\[
\nu^{(k)} = \lambda \sum_j z^{(k)} + 0.5* \min_{b \in \Rbb^{p}} \ \ \left\{ \left\|X b - y \right\|_2^2 \ \middle| \ b_j = 0\  \forall j \mbox{ s.t., } z_j^{(k)} = 0\right\} ; 
\]
let $b^{(k)}$ denote an optimal solution; \\
Let $K$ be the index such that $\nu^{(K)}$ is the smallest in $\left\{\nu^{k}, k=1,...,N\right\}$.
Then the output vector is set as $\hat{b} := b^{(K)}$.
\end{algorithm}

\section{Numerical Results}
We perform preliminary numerical experiments on simulated data sets. Our results show that (\ref{eq:SDP}) is a much tighter relaxation than a convex relaxation
proposed in \cite{PilanciWainwrightGhaoui2015}. We also conduct experiments to show the effectiveness of our rounding algorithm proposed in section \ref{sec:GWrounding}.

We consider the formulation (\ref{eq:l2l0}),
\[
\min_b \ \ \frac{1}{2} \|X b - y\|_2^2 + \frac{1}{2} \mu \|b\|_2^2 + \lambda \|b\|_0.
\]
We have shown in section \ref{sec:pp} (Remark \ref{rmk:PWG}) that the convex relaxation (\ref{eq:PWGpenalty}) (proposed by Pilanci, Wainwright and Ghaoui \cite{PilanciWainwrightGhaoui2015}) 
can be derived as a special case of perspective relaxation. So the semidefinite relaxation proposed in section \ref{sec:SDP}, when applied to the equivalent form (\ref{eq:reformL2L0}), is theoretically no weaker 
than (\ref{eq:PWGpenalty}). In the following example we show that (\ref{eq:SDP}) is indeed much tighter on our simulated problem sets. For comparison, we also solve the MIQP formulation of (\ref{eq:l2l0}) with Gurobi,
\begin{equation}\label{eq:MIQPtest}
\min_b \ \ \frac{1}{2} \|X b - y\|_2^2 + \frac{1}{2} \mu \|b\|_2^2 + \lambda \sum_{i=1}^p z_i, \ \ s.t. \ \ \ -M z_i \leq b_i \leq M z_i, \ \ \ z \in \{0,1\}^p.
\end{equation}

In the following example, we set $n=100$, $p=60$, and the ``true" sparsity level $k=10$. Each row of $X$ is randomly generated with the normal distribution $\Ncal(0, I)$, where $I$ is the $p\times p$ identity matrix, and then
 divided by $\sqrt{n}$ for normalization. An underling true sparse vector $b^{true} \in \Rbb^p$ is generated by 
\[
b^{true}_i  = \begin{cases} U_{[-1, -0.5]\cup [0.5, 1]}, & i=1,...,k, \\ 0, & i=k+1,...,p, \end{cases}
\]
where $U_{S}$ is the uniform distribution on set $S$.  Then the response vector $y$ is generated by 
\[
y = X b^{true} + \epsilon, \ \ \mbox{ where } \ \epsilon_i \sim \Ncal(0,5), \ \forall i.
\]
When solving MIQP formulation (\ref{eq:MIQPtest}), $M$ is set as $5 \|b^{true}\|_\infty$.

For each pair of parameters $(\lambda, \mu)$, we randomly generate 30 instances. For each instance, we run Gurobi for 60 seconds, and denote the best upper bound as ${\tau}_{UB}$, and the best lower bound as $\tau_{Grb}$.
The optimal values of convex relaxation (\ref{eq:PWGpenalty}), as well as (\ref{eq:SDP}) applied to (\ref{eq:reformL2L0}), are computed and denoted by $\tau_{PWG}$ and $\tau_{SDP}$, respectively. Then three kinds of relative gap 
are computed by 
\[
\mbox{GrbGap} = \frac{\tau_{UB} - \tau_{Grb}}{\tau_{UB}} \times 100\%, \ \ \mbox{SDPGap} = \frac{\tau_{UB} - \tau_{SDP}}{\tau_{UB}} \times 100\%, \ \ \mbox{PWGGap} = \frac{\tau_{UB} - \tau_{PWG}}{\tau_{UB}} \times 100\%.
\]
Table \ref{tab:gaps} summarizes the average relative gap of the 30 instances for each pair of $\lambda$ and $\mu$. All experiments are run on a workstation with AMD Opteron(tm) Processor 6344,
which has a max clock speed 2.6GHz and 24 cores. 

\begin{table}[htdp]
\caption{Average relative gap of Gurobi, (\ref{eq:SDP}) and convex relaxation proposed in \cite{PilanciWainwrightGhaoui2015}} \label{tab:gaps}
\begin{center}
\begin{tabular}{c|r|rrrrr|}
\hline
\multicolumn{2}{c}{} & $\mu=0.1$& $\mu=0.2$& $\mu=0.3$& $\mu=0.4$& $\mu=0.5$ \\
\hline
 \multirow{4}{*}{$\lambda=0.1$} 
&SDPGap&   2.29\%&    1.28\%&    0.72\%&    0.56\%&    0.36\%    \\
&PWGGap &   7.97\%&  4.20\%&    2.79\%&    2.06\%&    1.55\%    \\
\cline{2-7}
&GrbGap&   4.88\%&    4.09\%&    4.00\%&    4.09\%&    3.91\%     \\
& (\#nodes) & (7.8E5) & (7.9E05) & (7.5E05) & (7.2E5) & (7.2E5)  \\
 \hline 
 \multirow{4}{*}{$\lambda=0.2$} 
 &SDPGap &    3.77\%&    2.15\%&    1.43\%&    0.88\%&    0.65\%  \\
&PWGGap &     12.25\%&    7.12\%&    4.81\%&    3.29\%&    2.76\%  \\
\cline{2-7}
&GrbGap &   4.17\%&   4.24\%&    3.33\%&    3.19\%&    3.03\%   \\
&(\#nodes) & (7.8E5) & (7.3E5) & (7.1E5) & (6.4E5) & (6.2E5)  \\
 \hline
  \multirow{4}{*}{$\lambda=0.3$} 
&SDPGap &     4.55\%&    2.79\%&    1.49\%&    0.98\%&    0.82\%   \\
&PWGGap &      14.19\%&    8.90\%&    5.42\%&    3.93\%&    3.30\%    \\
\cline{2-7}
&GrbGap &     1.62\%&    2.07\%&    1.09\%&    1.09\%&    1.60\%    \\
&(\#nodes) &(5.3E5) & (5.3E5) & (4.3E5) & (5.4E5) & (5.6E5)  \\
\hline
 \multirow{4}{*}{$\lambda=0.4$} 
 &SDPGap &      5.13\%&    2.76\%&    1.50\%&    0.91\%&    0.68\%    \\
&PWGGap &      15.98\%&    9.26\%&    6.01\% &   4.12\%&    3.24\%    \\
\cline{2-7}
& GrbGap &      0.74\%&    0.65\%&    0.11\%&    0.00\%&    0.04\%    \\
&(\#nodes) &(4.2E5) & (2.6E5) & (2.5E5) & (1.6E5) & (1.2E5)  \\
 \hline
  \multirow{4}{*}{$\lambda=0.5$} 
  &SDPGap &      4.60\%&    2.53\%&    1.59\%&    0.89\% &   0.67\% \\
&PWGGap &      15.49\%&   9.02\%&    6.11\%&    4.27\% &   3.28\% \\
\cline{2-7}
& GrbGap &      0.01\%&    0.00\%&    0.00\%&    0.00\%&   0.00\% \\
&(\#nodes)& (1.6E5) & (9.8E4) & (9.3E4) & (7.8E4) & (6.7E4)  \\
\hline
\end{tabular}
\end{center}
\label{tab:gap}
\end{table}%

In all cases, SDPGap is much smaller than PWGGap. In general, the two convex relaxations (\ref{eq:SDP}) and PWG relaxation become tighter as $\mu$ gets larger. This is expected as perspective
relaxation performs especially good when corresponding quadratic forms are nearly diagonal.
As $\lambda$ gets larger, the lower bounds obtained by Gurobi in the 60-second time limit improves, and when $\lambda \geq 0.3$, they become better
than the other two convex relaxations. However, this lower bounds are obtained at the expense of examining large numbers ($10^4 \sim 10^5$) of nodes, while
the computational costs for solving (\ref{eq:SDP}) and the PWG relaxation are negligible in our setting of small $p (=60)$. 

We now consider the effectiveness of Goemans-Williamson rounding in our context. We applied Algorithm \ref{alg:round} to the SDP solutions for all 750 generated instances with sample size $N=1000$.
Let $\tau_{GW}$ denotes the best objective value of problem (\ref{eq:l2l0}) found by the rounding procedure. In majority of cases we have $\tau_{GW} \geq \tau_{UB}$, i.e., the upper bounds 
obtained by the rounding procedure are no better than those found by Gurobi in the time limit of 60 seconds. However in 555 out of the 750 instances 
 they are equal, i.e.,  $\tau_{GW} = \tau_{UB}$. There are only 6 instances where the rounding procedure provides strictly better upper bounds.
However $\tau_{GW}$ is always very close to $\tau_{UB}$. In table \ref{tab:UB}, we report the averaged relative differences 
\[
\frac{\tau_{GW} - \tau_{UB}}{\tau_{UB}} \times 100 \%
\]
for each pair of choices of $\lambda$ and $\mu$. We also ran Gurobi for a longer period of time (300 seconds) on a subset of instances. In all instances we tested, Gurobi reports no improvement on the 
upper bounds after the first 60 seconds.

\begin{table}[htdp]
\caption{Relative difference of upper bounds by Goemans-Williamson rounding and  Gurobi}\label{tab:UB}
\begin{center}
\begin{tabular}{c|rrrrr|}
\hline
 &$\mu=0.1$& $\mu=0.2$& $\mu=0.3$& $\mu=0.4$& $\mu=0.5$ \\
\hline
$\lambda = 0.1$ &   0.21\% & 0.07\%&   0.02\%&    0.01\%&    0.00\%    \\
\hline
$\lambda = 0.2$ &    0.28\%  &  0.13\% &   0.02\%  &  0.01\% &   0.00\% \\
$\lambda = 0.3$  &  0.30\%  &  0.14\%  &  0.06\%  &  0.00\%  &  0.00\% \\
$\lambda = 0.4$  &  0.34\%  &  0.03\%  &  0.03\%  &  0.01\%  &  0.00\% \\
$\lambda = 0.5$   & 0.20\% &   0.09\%  &  0.03\% &   0.00\% &   0.00\% \\
\hline
\end{tabular}
\end{center}
\label{tab:gap}
\end{table}%

We finally comment on the computational cost of solving (\ref{eq:SDP}). The size of (\ref{eq:SDP}) is primarily determined by $p$ -- the number of predictor variables in regression, 
while does not depend on $n$. Also note that (\ref{eq:SDP}) has a relatively ``clean" form, i.e., the number of linear constraints is small, and in fact grows linearly with respect to $p$. 
The dual-scaling interior point algorithm for SDP \cite{BensonYeZhangDsdp1} is especially suitable for solving such SDP problems to high accuracy. In table  \ref{tab:SDPtime} we report the 
typical computational time needed to solve one instance of (\ref{eq:SDP}) as $p$ increases in table, using the software DSDP \cite{dsdp5} implemented by Benson, Ye and Zhang, 
with their default parameters.
\begin{table}[htdp]
\caption{Computational time (seconds) to solve (\ref{eq:SDP}) with DSDP \cite{dsdp5}}\label{tab:SDPtime}
\begin{center}
\begin{tabular}{|ccccc|}
\hline
$p=50$ & $p=100$ & $p=200$ & $p=400$ & $p=800$ \\
\hline
0.33 & 1.20 & 4.58 & 37.48 & 278.9 \\
\hline
\end{tabular}
\end{center}
\label{tab:gap}
\end{table}%

In practice (\ref{eq:SDP}) needs to be solved many times for different choices of $\lambda$. Therefore when $p \geq 400$, it may not be a viable solution to solve (\ref{eq:SDP}) using interior point 
methods. In such cases, it makes sense to consider cheaper approximate algorithms, such as the first-order algorithms, that also benefit from warm-starting when $\lambda$ is 
slightly changed. An especially attractive approach is to use low rank factorizations and nonlinear programming \cite{BurMon03-1}. We will leave comprehensive computational studies for future work.

\section{Conclusions}
One of the most popular approaches for sparse regression is to use various convex or nonconvex penalty functions to approximate the $\ell_0$ norm.
In this paper, we propose an alternative perspective by considering convex relaxations for the mixed-integer quadratic 
programming (MIQP) formulations of the sparse regression problem. We show that convex relaxations, especially conic optimization, can be 
a valuable tool. Both of the minimax concave penalty (MCP) function and the reverse huber penalty function considered in the literature are special cases of  
perspective relaxation for the MIQP formulation. The tightest perspective relaxation leads to a minimax problem that can be solved by semidefinite programming. 
This semidefinite relaxation has several elegant interpretations. First,  it achieves the balance of convexity and the approximation quality to the $\ell_0$ norm in a minimax sense. Second, 
it can be interpreted as searching for the first two moments of a \textit{rescaled multivariate Bernoulli random variable} that is used to represent our ``beliefs" of parameters in estimation, 
which then reveals connections with the Boolean Quadric Polytope in combinatorial optimization.
Third, by interpreting the sparse regression problem as a two level optimization with the inner level being the Max-Cut problem, our proposed semidefinite relaxation can be realized by 
replacing the inner level problem with its semidefinite relaxation considered by Goemans and Williamson. 
The last interpretation suggests to adopt Goemans-Williamson rounding procedure to find approximate solutions to the sparse regression problem. Preliminary numerical experiments 
demonstrate our proposed semidefinite relaxation is much tighter than a convex relaxation proposed by Pilanci, Wainwright and El Ghaoui using the reverse Huber penalty 
 \cite{PilanciWainwrightGhaoui2015}. The effectiveness
of Goemans-Williamson rounding is also demonstrated.

Future work should include a more comprehensive simulation study to compare the SDP-based variable selection method with other convex and nonconvex penalization-based methods,
in terms of their support identification and prediction accuracy. Algorithmically, it is of interests to develop more scalable algorithms to approximately solve the semidefinite relaxation (\ref{eq:SDP}) by exploiting 
the (relatively simple) problem structure. 

\bibliographystyle{spmpsci}      
\bibliography{regrelax}   

\end{document}